\documentclass[11pt]{article}
\usepackage[margin=1in]{geometry}
\usepackage[authoryear,round,semicolon]{natbib}

\usepackage{amsmath} % assumes amsmath package installed
\usepackage{amssymb}  % assumes amsmath package installed
\usepackage{amsthm}

\usepackage{mathtools}
\usepackage{graphicx}

 \usepackage{cite}

\usepackage{multirow}

\usepackage{tcolorbox}

\usepackage{caption}
\usepackage{booktabs}
\usepackage[table]{xcolor}

\numberwithin{equation}{section}

\newtheorem{theorem}{Theorem}[section]
\newtheorem{lemma}[theorem]{Lemma}
\newtheorem{proposition}[theorem]{Proposition}
\newtheorem{corollary}[theorem]{Corollary}

\newtheorem{assumption}[theorem]{Assumption}

\newtheorem{remark}[theorem]{Remark}

\usepackage{algorithm}
\usepackage{algorithmic}

\definecolor{darkblue}{rgb}{0.0,0.0,0.65}
\definecolor{darkred}{rgb}{0.68,0.05,0.0}
\definecolor{darkgreen}{rgb}{0.0,0.29,0.29}
\definecolor{darkpurple}{rgb}{0.47,0.09,0.29}

\usepackage[backref=page]{hyperref}
\hypersetup{
  colorlinks=true,
  citecolor=darkblue,
  linkcolor=darkred,
  filecolor=darkblue,
  urlcolor=darkblue
}

\usepackage[capitalize,noabbrev]{cleveref}

\title{\bf
Sparse Data Augmentation for Optimization with Provable Guarantees
}

\date{}

\author{Behrooz Tahmasebi\thanks{Harvard John A. Paulson School of Engineering and Applied Sciences, Harvard University, Cambridge, MA 02138, USA. Emails: \texttt{\{behrooz\_tahmasebi,mweber\}@seas.harvard.edu}} \and Melanie Weber\footnotemark[1]%
}

\begin{document}

\maketitle
% \thispagestyle{empty}
% \pagestyle{empty}
%

%%%%%%%%%%%%%%%%%%%%%%%%%%%%%%%%%%%%%%%%%%%%%%%%%%%%%%%%%%%%%%%%%%%%%%%%%%%%%%%%

\begin{abstract}
  In nonconvex optimization problems arising in geometric machine learning, data augmentation is commonly used to promote invariance by averaging empirical losses over transformations of the data. Computing the fully augmented objective, however, requires access to every element of the transformation group $G$, which may be prohibitively expensive when $G$ is large or accessible only through sampling. We study whether full augmentation can instead be approximated using a small, fixed sample of transformations acquired before optimization and reused thereafter. Under suitable regularity conditions, we show that, with probability at least $1-\delta$, gradient descent (GD) on the resulting sparsely augmented objective returns an $\varepsilon$-stationary point of the fully augmented objective using $\mathcal{O}\bigl((\log |G|+\log(1/\delta))/\varepsilon^2\bigr)$ group-transformation-oracle queries. By comparison, standard group stochastic gradient descent (group-SGD), which samples a fresh transformation at every iteration, uses $\mathcal{O}(1/\varepsilon^4)$ transformation queries. Therefore, gradient descent with fixed sparse augmentation requires fewer transformation queries than both GD applied to the fully augmented objective and group-SGD. Our proof techniques, which may be of independent interest, establish a uniform approximation of the full group-averaged gradient field by a random group average using spectral properties of group-induced operators and tools from representation theory.
  \end{abstract}

%%%%%%%%%%%%%%%%%%%%%%%%%%%%%%%%%%%%%%%%%%%%%%%%%%%%%%%%%%%%%%%%%%%%%%%%%%%%%%%%

\clearpage
\tableofcontents 
\clearpage

\section{Introduction}

We consider the classical empirical risk minimization (ERM) problem
\begin{equation}
\min_{\theta\in\mathbb{R}^p}
\mathcal{R}_n(\theta)
\coloneqq
\frac{1}{n}
\sum_{i=1}^{n}
\ell_i(x_i;\theta),
\end{equation}
where $\{x_i\}_{i=1}^n \subset \mathbb{R}^d$ denote the data (either labeled
or unlabeled), and $\ell_i(x_i;\theta)$ is the loss incurred by the parameter
$\theta$ on the $i$th sample. The parameter $\theta$ defines a predictor
$f_\theta:\mathbb{R}^d\to\mathbb{R}$, for example in a supervised learning
problem.

In geometric machine learning, one often seeks predictors that are invariant
under a known group of transformations. Let $G$ be a finite group acting on
$\mathbb{R}^d$. We are interested in predictors satisfying
\begin{equation}
f_\theta(g\cdot x)
=
f_\theta(x),
\qquad
\forall g\in G,
\quad
\forall x\in\mathbb{R}^d.
\end{equation}
Such invariances arise in many applications, including permutation invariance
for sets and graphs, sign invariance in spectral methods, translations in
images, and coordinate transformations in point clouds. A central challenge is
to incorporate these invariances into the learning problem in a principled and
computationally scalable manner.

A standard approach is \emph{data augmentation}, in which the loss of each
sample is averaged over its transformed versions. This leads to the fully
augmented empirical risk
\begin{equation}
\mathcal{R}_n^{G}(\theta)
\coloneqq
\frac{1}{n|G|}
\sum_{i=1}^{n}
\sum_{g\in G}
\ell_i(g\cdot x_i;\theta).
\end{equation}
The augmented objective incorporates the desired transformation structure into
the learning problem. However, evaluating its gradient requires averaging over
all $|G|$ transformations. Direct optimization of $\mathcal{R}_n^{G}$ can
therefore be computationally prohibitive when the group is large.

\paragraph{Streaming augmentation.}
A common alternative is to sample only a small number of transformations at
each iteration. We refer to this paradigm as \emph{streaming augmentation}.
At iteration $t$, a fresh random batch $S_t\subseteq G$ is sampled, and the
parameter is updated according to
\begin{equation}
\theta_{t+1}
=
\theta_t
-
\eta_t
\nabla\mathcal{R}_n^{S_t}(\theta_t),
\end{equation}
where $\eta_t>0$ is the step size and
\begin{equation}
\mathcal{R}_n^{S_t}(\theta)
\coloneqq
\frac{1}{n|S_t|}
\sum_{i=1}^{n}
\sum_{g\in S_t}
\ell_i(g\cdot x_i;\theta).
\end{equation}
Streaming augmentation thus performs optimization using a sequence of randomly
changing partially augmented objectives. If $|S_t|=b$ and the method runs for
$T$ iterations, it draws $bT$ transformations over the course of optimization.

To quantify this sampling requirement, we define a call to the
\emph{group-sampling oracle} as drawing an independent transformation
uniformly from $G$. The group-oracle complexity of a method is the total number
of fresh transformations drawn from $G$. This quantity is distinct from the
number of transformed-gradient evaluations: although previously sampled
transformations can be reused, the gradient over them must be recomputed at
each new iterate.

Under standard smooth nonconvex assumptions, streaming stochastic gradient
descent with $|S_t|=b$ satisfies
\begin{equation}
\mathbb{E}\left[
\min_{0\leq t\leq T}
\left\|
\nabla \mathcal{R}_n^G(\theta_t)
\right\|
\right]
=
\mathcal{O}\left(
\frac{1}{(bT)^{1/4}}
\right).
\end{equation}
Consequently, reaching an expected stationarity level of at most $\epsilon$
requires $bT=\mathcal{O}(1/\epsilon^4)$. Because the method draws $b$ fresh
transformations at each iteration, this amounts to
$\mathcal{O}(1/\epsilon^4)$ calls to the group-sampling oracle.

This motivates the central optimization question considered in this paper:
\begin{tcolorbox}
\centering
\emph{Can gradient descent on a fixed, randomly sampled augmentation achieve
an $\epsilon$-stationary point of the fully augmented objective without fresh
group-oracle queries at every iteration?}
\end{tcolorbox}

\begin{table}[t]
  \centering
  \scriptsize
  \caption{Comparison in the smooth nonconvex setting. Here,
$T(\epsilon)$ is the number of iterations required to reach an
$\epsilon$-stationary point, $b=|S_t|$ is the streaming batch size, and
$m=|S|=\mathcal{O}(\log |G|/\epsilon^2)$ is the fixed one-shot augmentation
size.}
  \label{tab:nonconvex-comparison}
  \begin{tabular*}{\linewidth}{@{\extracolsep{\fill}}llccc@{}}
  \toprule
  Method
  & Regime
  & Augmentation set
  & $T(\epsilon)$
  & Group-oracle calls \\
  \midrule
  GD on $\mathcal{R}_n^G$
  & Full
  & Full group $G$
  & $\mathcal{O}(1/\epsilon^2)$
  & $|G|$ (full group) \\
  SGD on $\mathcal{R}_n^{S_t}$
  & Streaming
  & Fresh $S_t$ at each iteration
  & $\mathcal{O}(1/(b\epsilon^4))$
  & $\mathcal{O}(1/\epsilon^4)$ \\
  GD on $\mathcal{R}_n^S$
  & One-shot
  & Fixed $S$ throughout
  & $\mathcal{O}(1/\epsilon^2)$
  & $\mathcal{O}(\log |G|/\epsilon^2)$ \\
  \bottomrule
  \end{tabular*}
  \end{table}

\paragraph{One-shot augmentation.}
We study the latter possibility through a paradigm that we call
\emph{one-shot augmentation}. Instead of drawing a fresh subset at every
iteration, we sample a single random subset $S\subseteq G$ before optimization,
fix it for the entire optimization trajectory, and run gradient descent on
\begin{equation}
\mathcal{R}_n^{S}(\theta)
\coloneqq
\frac{1}{n|S|}
\sum_{i=1}^{n}
\sum_{g\in S}
\ell_i(g\cdot x_i;\theta).
\end{equation}
The resulting iterates satisfy
\begin{equation}
\theta_{t+1}
=
\theta_t
-
\eta_t
\nabla\mathcal{R}_n^{S}(\theta_t).
\end{equation}
Conditioned on the initial draw of $S$, the objective remains fixed and the
subsequent optimization procedure is deterministic. Unlike streaming
augmentation, which continually queries the group-sampling oracle, one-shot
augmentation makes only $|S|$ oracle calls, independently of the number of
optimization iterations.

Establishing guarantees for this method presents a fundamental dependence
challenge. Gradient descent is performed on the partially augmented objective
$\mathcal{R}_n^{S}$, whereas the desired stationarity guarantee concerns the
fully augmented objective $\mathcal{R}_n^{G}$. Moreover, because the same
random subset $S$ is reused throughout optimization, every iterate $\theta_t$
depends on $S$. This differs from streaming augmentation, where the fresh batch
$S_t$ drawn at iteration $t$ is independent of $\theta_t$ conditional on the
preceding optimization history.

Pointwise concentration at a fixed parameter therefore does not control the
gradient discrepancy at the data-dependent iterates produced by the one-shot
method. Instead, we establish a new uniform approximation of the form
\begin{equation}
\sup_{\theta}
\left\|
\nabla\mathcal{R}_n^{S}(\theta)
-
\nabla\mathcal{R}_n^{G}(\theta)
\right\|
\leq \epsilon,
\end{equation}
which holds simultaneously over the parameter domain and therefore along the
entire optimization trajectory.

\paragraph{Main result.}
Under our structural and smoothness assumptions, we show that a fixed subset
of size $|S|=\mathcal{O}(\log |G|/\epsilon^2)$ is sufficient. More precisely,
with high probability over the one-time draw of $S$, gradient descent on
$\mathcal{R}_n^{S}$ generates, within
$T=\mathcal{O}(1/\epsilon^2)$ iterations, an iterate satisfying
\begin{equation}
\min_{0\leq t\leq T}
\left\|
\nabla\mathcal{R}_n^{G}(\theta_t)
\right\|
\leq
\epsilon.
\end{equation}
Thus, the one-shot method obtains an $\epsilon$-stationarity guarantee for the
fully augmented objective using only
$\mathcal{O}(\log |G|/\epsilon^2)$ sampled transformations.

Note that, although the optimization updates are computed using the fixed partially
augmented objective, the resulting stationarity guarantee is stated with
respect to the fully augmented objective. One-shot augmentation requires only
$\mathcal{O}(\log |G|/\epsilon^2)$ calls to the group-sampling oracle, compared
with the $\mathcal{O}(1/\epsilon^4)$ calls required by streaming stochastic
gradient methods. Table~\ref{tab:nonconvex-comparison} compares the iteration and group-oracle complexities of full, streaming, and one-shot augmentation.

Our results show that the
$\mathcal{O}(1/\epsilon^4)$ group-oracle complexity associated with standard
streaming stochastic-gradient methods is not intrinsic to data augmentation
over finite groups. By exploiting the group structure, one-shot augmentation
reduces this complexity to
$\mathcal{O}(\log |G|/\epsilon^2)$ while retaining a stationarity guarantee for
the fully augmented objective. In particular, continual resampling is not
necessary: a single random subset drawn before optimization can be reused
throughout the entire optimization trajectory.

Our uniform approximation guarantees are obtained by exploiting spectral
properties of group-induced operators together with tools from the
representation theory of finite groups. These techniques control the
dependence between the sampled subset and the resulting optimization trajectory
and may be useful in other optimization problems involving structured random
averages.

\subsection{Contributions}

Our contributions are as follows:
\begin{itemize}
\item
We introduce and analyze \emph{one-shot augmentation} as an alternative to
\emph{streaming augmentation}. One-shot augmentation samples a single subset
of transformations before optimization and reuses it at every iteration,
rather than drawing fresh transformations throughout optimization.

\item
In the smooth nonconvex setting, we prove that gradient descent on the fixed
partially augmented objective generates an $\epsilon$-stationary point of the
fully augmented objective using only
$\mathcal{O}(\log |G|/\epsilon^2)$ calls to the group-sampling oracle. This
improves over the $\mathcal{O}(1/\epsilon^4)$ oracle calls required by standard
streaming stochastic-gradient methods.

\item
We establish uniform approximation guarantees between the partially and fully
augmented objectives and their gradients. The analysis accounts for the
dependence between the fixed random subset and the optimization trajectory
using spectral properties of group-induced operators and tools from
finite-group representation theory.
\end{itemize}

\subsection{Related Work}

Geometric machine learning has emerged as a powerful framework for incorporating structure and symmetries into learning algorithms, with broad applications across scientific domains, including particle physics, molecular modeling, and beyond \citep{bronstein2021geometric,bogatskiy2020lorentz,zhang2025artificial,batzner20223,smidt2021euclidean,batzner2023advancing,weber25}. By leveraging known symmetries, these approaches improve generalization, robustness, and data efficiency, making them particularly well-suited for scientific machine learning tasks.

From a theoretical standpoint, the role of symmetry in learning has been studied through both statistical and computational lenses. On the statistical side, invariances have been shown to yield significant gains in sample complexity and generalization \citep{tahmasebi2023exact}. On the computational side, recent work has begun to characterize the complexity of learning under symmetry constraints~\citep{soleymani2025learning, soleymani2026efficient,kiani2024hardness}. These works highlight that symmetry can fundamentally alter both the statistical and algorithmic properties of learning problems.

Data augmentation is a widely used, model-agnostic technique for incorporating symmetry, where training data are enriched using transformations from a symmetry group. A general group-theoretic framework for data augmentation was developed in \citep{chen2020group}, while kernel-based analyses and feature-based interpretations were studied in \citep{dao2019kernel,shen2022data}. More recent works have investigated the statistical and optimization effects of augmentation, including its role as implicit regularization and its impact on sample efficiency \citep{lin2024good,yang2023sample, tahmasebi2026data}. Despite these advances, the optimization complexity of data augmentation, particularly in regimes with large symmetry groups, remains less understood.

A central challenge in practice is that symmetry groups are often large, making full group averaging computationally infeasible. Recent work has addressed this issue by studying approximate symmetry through sparse averaging \citep{tahmasebi2025achieving}. In particular, it has been shown that approximate symmetry can be enforced using only a logarithmic number of group elements, establishing an exponential gap between exact and approximate symmetry. This result is rooted in tools from representation theory and the spectral properties of random Cayley graphs, building on classical results such as \citep{alon1994random}. Our work builds on this line of research and shows that such sparse approximation guarantees can be directly leveraged to obtain convergence guarantees in optimization with data augmentation.

Finally, our analysis relies on standard results from first-order optimization, including convergence guarantees for gradient descent and stochastic gradient methods in nonconvex settings \citep{Wright_Recht_2022}. We combine these classical tools with recent advances in geometric machine learning to obtain improved guarantees on the number of transformation samples required for optimization under symmetry.

\subsection{Notation}

We use standard asymptotic notation $\mathcal{O}(\cdot)$. When $S$ is a
multiset sampled with replacement, we use $|S|$, by a slight abuse of
notation, for its size counting multiplicities. Thus, if
$S=\{g_1,\ldots,g_m\}$, then $|S|=m$, even when some sampled transformations
coincide. For vectors, we use $\|\cdot\|$ to denote the $\ell_2$ Euclidean
norm.
Let $G$ be a finite set of transformations acting on the data domain $\mathcal{X}$, and for $g \in G$ and $x \in \mathcal{X}$, we write $g \cdot x$ for the action. For any subset $S \subseteq G$, we denote by $\mathcal{R}_n^{S}(\theta)$ the corresponding partially augmented empirical risk, and by $\mathcal{R}_n^{G}(\theta)$ the fully augmented risk.
We say that $\theta$ is an $\epsilon$-stationary point if and only if $\|\nabla \mathcal{R}_n^{G}(\theta)\| \leq \epsilon$.

\section{Problem Statement}

We begin with the empirical risk minimization problem
\begin{equation}
\underset{\theta\in\Theta}{\operatorname{minimize}}
\quad
\mathcal{R}_n(\theta)
\coloneqq
\frac{1}{n}
\sum_{i=1}^n
\ell_i(x_i;\theta),
\label{eq:erm}
\end{equation}
where $\{x_i\}_{i=1}^n$ denote the data,
$\theta\in\Theta\subseteq\mathbb{R}^p$ is the optimization parameter, and
$\ell_i(x_i;\theta)$ is the loss associated with the $i$th sample.

Let $G$ be a finite group acting on the data domain. Our target problem is the
fully augmented empirical risk minimization problem
\begin{equation}
\underset{\theta\in\Theta}{\operatorname{minimize}}
\quad
\mathcal{R}_n^{G}(\theta)
\coloneqq
\frac{1}{n|G|}
\sum_{i=1}^n
\sum_{g\in G}
\ell_i(g\cdot x_i;\theta).
\label{eq:fully-augmented-problem}
\end{equation}
For any nonempty subset $S\subseteq G$, we define the partially augmented
empirical risk
\begin{equation}
\mathcal{R}_n^{S}(\theta)
\coloneqq
\frac{1}{n|S|}
\sum_{i=1}^n
\sum_{g\in S}
\ell_i(g\cdot x_i;\theta).
\label{eq:partially-augmented-risk}
\end{equation}

We consider the following three optimization schemes:
\begin{align}
\text{Full-group GD:}\qquad
\theta_{t+1}
&=
\theta_t-\eta_t\nabla\mathcal{R}_n^{G}(\theta_t),
\label{eq:full-group-gd}
\\
\text{Streaming group-SGD:}\qquad
\theta_{t+1}
&=
\theta_t-\eta_t\nabla\mathcal{R}_n^{S_t}(\theta_t),
\qquad |S_t|=b,
\label{eq:streaming-group-sgd}
\\
\text{One-shot sparse GD:}\qquad
\theta_{t+1}
&=
\theta_t-\eta_t\nabla\mathcal{R}_n^{S}(\theta_t),
\qquad |S|=m,
\label{eq:one-shot-sparse-gd}
\end{align}
where $\eta_t>0$ is the step size. In streaming group-SGD, a fresh random
batch $S_t$ is sampled at each iteration. In one-shot sparse GD, a single
random subset $S$ is sampled before optimization and held fixed throughout
all iterations.

Throughout the paper, we assume that the transformed losses are uniformly
$L$-smooth in the optimization parameter:
\begin{equation}
\left\|
\nabla_\theta\ell_i(g\cdot x_i;\theta)
-
\nabla_\theta\ell_i(g\cdot x_i;\theta')
\right\|
\leq
L\|\theta-\theta'\|
\quad
\text{for all }i\in\{1,\ldots,n\},\ g\in G,\ \text{and }
\theta,\theta'\in\Theta.
\label{eq:smoothness}
\end{equation}
Because averaging preserves the Lipschitz-gradient constant, it follows that
$\mathcal{R}_n^S$ is $L$-smooth for every nonempty multiset $S\subseteq G$;
in particular, $\mathcal{R}_n^G$ is $L$-smooth.

A point $\theta\in\Theta$ is called an $\epsilon$-stationary point of the fully
augmented objective if
\begin{equation}
\left\|
\nabla\mathcal{R}_n^{G}(\theta)
\right\|
\leq
\epsilon.
\label{eq:epsilon-stationarity}
\end{equation}

To quantify access to group transformations, we introduce a
\emph{group-sampling oracle}. Each call to the oracle returns an independent
transformation drawn uniformly from $G$. The \emph{group-oracle complexity} of
an algorithm is the total number of such calls made over the entire
optimization procedure. This notion is distinct from the number of
transformed-loss or transformed-gradient evaluations: once a transformation
has been sampled, it may be reused at multiple parameter iterates without an
additional group-oracle call.

The oracle model captures settings in which obtaining a valid transformation
is itself costly. In particular, a group sample may represent the discovery,
identification, or acquisition of a symmetry of the data, rather than merely
an inexpensive draw from an explicitly enumerated group. Such samples may
therefore constitute valuable information that can be stored and reused. A
broader discussion of this interpretation and its connection to symmetry
discovery is provided in the related-work discussion in the appendix.

Our objective is to determine the group-oracle complexity required by
one-shot sparse GD to produce an $\epsilon$-stationary point of the fully
augmented objective $\mathcal{R}_n^{G}$.

\section{Main Results}
\label{sec:main-results}

We study \emph{one-shot sparse gradient descent}, in which a single collection
of group transformations is sampled before optimization and reused throughout
the entire optimization trajectory. Specifically, let
\begin{equation}
S=\{g_1,\ldots,g_m\},
\qquad
g_1,\ldots,g_m
\overset{\mathrm{i.i.d.}}{\sim}
\operatorname{Unif}(G),
\label{eq:one-shot-sample}
\end{equation}
where $S$ is treated as a multiset. Starting from $\theta_0$, the method
performs
\begin{equation}
\theta_{t+1}
=
\theta_t
-
\eta_t\nabla\mathcal{R}_n^{S}(\theta_t),
\qquad
t=0,\ldots,T-1.
\label{eq:one-shot-update}
\end{equation}
Because the same sampled transformations are reused at every iteration, the
total number of group-oracle calls is exactly $m$, independently of $T$.

\begin{algorithm}[t]
\caption{One-Shot Sparse Gradient Descent}
\label{alg:sparse-aug}
\begin{algorithmic}[1]
\REQUIRE Initial point $\theta_0$, group-sampling oracle, sample size $m$,
step sizes $\{\eta_t\}_{t=0}^{T-1}$, and number of iterations $T$
\STATE Query the group oracle independently $m$ times to obtain
$S=\{g_1,\ldots,g_m\}$
\FOR{$t=0,\ldots,T-1$}
    \STATE
    $\theta_{t+1}
    =
    \theta_t-\eta_t\nabla\mathcal{R}_n^{S}(\theta_t)$
\ENDFOR
\STATE Choose
$\widehat{t}\in
\operatorname*{arg\,min}\limits_{0\leq t<T}
\|\nabla\mathcal{R}_n^{S}(\theta_t)\|$
\RETURN $\widehat{\theta}=\theta_{\widehat{t}}$
\end{algorithmic}
\end{algorithm}

Algorithm~\ref{alg:sparse-aug} optimizes the randomly constructed objective
$\mathcal{R}_n^{S}$, whereas the desired stationarity guarantee concerns the
fully augmented objective $\mathcal{R}_n^{G}$. Moreover, the iterates depend
on the same random sample $S$ used to define the sparse objective. A
pointwise concentration bound at a fixed parameter value is therefore
insufficient. Instead, we establish a single high-probability event on which
\begin{equation}
\sup_{\theta\in\Theta}
\left\|
\nabla\mathcal{R}_n^{S}(\theta)
-
\nabla\mathcal{R}_n^{G}(\theta)
\right\|
\label{eq:uniform-gradient-discrepancy}
\end{equation}
is uniformly controlled.

\subsection{Structural assumptions}

For every sample $i$ and parameter $\theta$, define the vector-valued
gradient function
\begin{equation}
h_{i,\theta}(x)
\coloneqq
\nabla_\theta\ell_i(x;\theta),
\qquad
h_{i,\theta}\colon\mathcal{X}\to\mathbb{R}^p.
\label{eq:gradient-function}
\end{equation}

\begin{assumption}[Invariant gradient RKHS]
\label{ass:gradient-rkhs}
There exists a vector-valued reproducing kernel Hilbert space
$\mathcal{H}$ of functions from $\mathcal{X}$ to $\mathbb{R}^p$ satisfying
the following properties:
\begin{itemize}
    \item For every $i\in\{1,\ldots,n\}$ and $\theta\in\Theta$, we have
    $h_{i,\theta}\in\mathcal{H}$.

    \item The action of $G$ induces a unitary representation
    $U\colon G\to\mathcal{U}(\mathcal{H})$ through
    \begin{equation}
    (U_g h)(x)
    \coloneqq
    h(g^{-1}\cdot x).
    \label{eq:induced-representation}
    \end{equation}

    \item Point evaluation is uniformly bounded: there exists
    $C_{\mathcal H}<\infty$ such that
    \begin{equation}
    \|h\|_{L^\infty}
    \leq
    C_{\mathcal H}\|h\|_{\mathcal H}
    \qquad
    \text{for every }h\in\mathcal H.
    \label{eq:rkhs-linfty-comparison}
    \end{equation}

    \item The gradient functions have uniformly bounded average RKHS norm:
    for some $B_{\mathcal H}<\infty$,
    \begin{equation}
    \sup_{\theta\in\Theta}
    \frac{1}{n}
    \sum_{i=1}^n
    \|h_{i,\theta}\|_{\mathcal H}
    \leq
    B_{\mathcal H}.
    \label{eq:uniform-rkhs-bound}
    \end{equation}
\end{itemize}
\end{assumption}

% \paragraph{Discussion of Assumption~\ref{ass:gradient-rkhs}.}
$\mathcal H$ need not be finite-dimensional; infinite-dimensional RKHSs are
also permitted. The conditions above are satisfied by a broad class of
regular models. In
particular, membership of the gradient functions in a common RKHS arises
naturally for finite-dimensional feature models and kernel-based models, and
the point-evaluation bound is automatic whenever the reproducing kernel is
uniformly bounded. The requirement that the group act unitarily is also
natural for finite groups. Indeed, whenever the function space is preserved
by the group action, an invariant inner product can be obtained by averaging
the original inner product over the group. The principal quantitative
requirement is therefore the uniform bound on the average RKHS norms of the
gradient functions. This condition holds, for example, when the relevant
parameter set is compact and the maps
$\theta\mapsto h_{i,\theta}$ are continuous in the RKHS norm.

The bound in \eqref{eq:rkhs-linfty-comparison} is a standard consequence of
the reproducing property. For the kernel $K$ of $\mathcal H$, each
$K(x,x)$ is a matrix in $\mathbb R^{p\times p}$, and the bound holds whenever
these matrices are uniformly bounded:
\begin{equation}
\sup_{x\in\mathcal X}
\|K(x,x)\|_{\mathrm{op}}^{1/2}
\leq
C_{\mathcal H}.
\label{eq:bounded-kernel}
\end{equation}
This comparison converts an operator-norm estimate in $\mathcal H$ into a
uniform pointwise estimate for the gradient functions.

The unitary representation in
Assumption~\ref{ass:gradient-rkhs} allows us to define the full and sampled
group-averaging operators
\begin{equation}
\Pi_G
\coloneqq
\frac{1}{|G|}
\sum_{g\in G}U_g,
\qquad
\Pi_S
\coloneqq
\frac{1}{m}
\sum_{j=1}^m U_{g_j^{-1}}.
\label{eq:group-averaging-operators}
\end{equation}
The inverse in the definition of $\Pi_S$ aligns the operator with the
augmentation convention: applying $U_{g_j^{-1}}$ evaluates $h$ at
$g_j\cdot x$. Thus, $\Pi_S$ averages over the transformations in $S$.
Because inversion preserves the uniform distribution on $G$, this convention
does not change the spectral concentration bound.
The full averaging operator $\Pi_G$ is the orthogonal projection onto the
$G$-invariant subspace
\begin{equation}
\mathcal{H}^{G}
\coloneqq
\left\{
h\in\mathcal H:
U_g h=h
\text{ for every }g\in G
\right\},
\label{eq:invariant-subspace}
\end{equation}
whereas $\Pi_S$ is its sparse random approximation.

\subsection{Spectral approximation of group averaging}

The sampled operator $\Pi_S$ can be interpreted as a convolution operator
associated with a random Cayley multigraph on $G$. The
representation-theoretic decomposition of this operator separates its
invariant component from its nontrivial irreducible components. Spectral
control of the latter yields the following approximation result. For a
bounded linear operator $A$ on $\mathcal H$, we write
$\|A\|_{\mathrm{op},\mathcal H}
\coloneqq\sup_{h\neq 0}\|Ah\|_{\mathcal H}/\|h\|_{\mathcal H}$.

\begin{proposition}[Spectral approximation of group averaging]
\label{prop:spectral-approximation}
With probability at least $1-\delta$ over the draw of $S$,
\begin{equation}
\left\|
\Pi_S-\Pi_G
\right\|_{\mathrm{op},\mathcal H}
\leq
\min\left\{
1,\,
\sqrt{
\frac{8}{3m}
\log\!\left(\frac{2|G|}{\delta}\right)
}
\right\}.
\label{eq:spectral-operator-bound}
\end{equation}
\end{proposition}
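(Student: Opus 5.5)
Because $\mathcal H$ may be infinite-dimensional, I will not apply matrix concentration on $\mathcal H$ directly. Instead I will reduce to the regular representation of $G$, a fixed $|G|$-dimensional space, and apply a matrix Bernstein inequality there, in the spirit of Alon--Roichman for random Cayley graphs.

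\emph{Reduction.} By Maschke/Peter--Weyl, the unitary representation $U$ decomposes orthogonally into isotypic components $\mathcal H=\bigoplus_{\rho\in\widehat G}\mathcal H_\rho$, with $\mathcal H_\rho\cong \rho\otimes \mathbb C^{n_\rho}$, where the multiplicity $n_\rho$ may be infinite. Both $\Pi_S$ and $\Pi_G$ are of the form $\sum_g c_g U_g$, so they commute with this decomposition. On $\mathcal H_\rho$ they act as $\widehat\mu_S(\rho)\otimes I$ and $\widehat\mu_G(\rho)\otimes I$, where $\widehat\mu(\rho)=\sum_g \mu(g)\rho(g)$. Hence
\begin{equation*}
\|\Pi_S-\Pi_G\|_{\mathrm{op},\mathcal H}\le \max_{\rho}\|\widehat\mu_S(\rho)-\widehat\mu_G(\rho)\|_{\mathrm{op}}.
\end{equation*}
Every irreducible $\rho$ occurs in the left regular representation $\lambda$ on $\mathbb C[G]$. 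The right-hand side is therefore bounded by the single finite-dimensional quantity
\begin{equation*}
\Bigl\|\tfrac1m\sum_{j=1}^m \lambda(g_j^{-1})-P\Bigr\|_{\mathrm{op}}, \qquad P=\tfrac1{|G|}\sum_g\lambda(g),
\end{equation*}
which is an operator on a space of dimension $|G|$. Inversion preserves $\mathrm{Unif}(G)$, so the $g_j^{-1}$ are again i.i.d.\ uniform.

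\emph{Concentration.} I set $X_j=\lambda(g_j^{-1})-P$, viewed as $|G|\times|G|$ matrices. These are i.i.d.\ and satisfy $\mathbb E X_j=0$. Since $\lambda(g)$ is unitary and commutes with the orthogonal projection $P$, we have $X_j=\lambda(g_j^{-1})(I-P)$, which gives $\|X_j\|\le 1$. For the variance, $X_jX_j^*=(I-P)\lambda\lambda^*(I-P)=I-P$, so $\|\sum_j\mathbb E X_jX_j^*\|\le m$, and the same holds for $X_j^*X_j$. The $X_j$ are not Hermitian, so I will use the rectangular (Hermitian-dilation) matrix Bernstein inequality. Its dimension factor is $2|G|$, which gives
\begin{equation*}
\Pr\Bigl[\Bigl\|\tfrac1m\sum_j X_j\Bigr\|\ge t\Bigr]\le 2|G|\exp\Bigl(-\frac{m t^2/2}{1+t/3}\Bigr).
\end{equation*}
For $t\le 1$ the exponent is at least $3mt^2/8$. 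Setting the right-hand side equal to $\delta$ yields $t=\sqrt{\tfrac{8}{3m}\log(2|G|/\delta)}$. When this value exceeds $1$, the deterministic bound below applies instead, which accounts for the $\min$.

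\emph{Trivial bound.} Note that $\Pi_S-\Pi_G=\Pi_S(I-\Pi_G)$, because $\Pi_S\Pi_G=\Pi_G$: each $U_g$ fixes invariant vectors. Also $\|\Pi_S\|\le1$ as an average of unitaries, and $I-\Pi_G$ is an orthogonal projection. Hence $\|\Pi_S-\Pi_G\|\le1$ always.

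The main obstacle is the reduction step. I must argue carefully that the operator norm on a possibly infinite-dimensional $\mathcal H$ is controlled by the regular-representation norm. The cleanest route is to note that $\Pi_S-\Pi_G$ is the image of the group-algebra element $\mu_S-\mu_G$ under the $*$-representation $U$ of $\mathbb C[G]$. For a finite group, $\|U(a)\|\le\|\lambda(a)\|$ because the $C^*$-norm on $\mathbb C[G]$ is realized by $\lambda$. This avoids explicitly handling infinite multiplicities. A secondary point to check is that the Bernstein variance parameter is exactly $m$, so that the constants $8/3$ and $2|G|$ match.
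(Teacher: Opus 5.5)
Your proposal is correct. It differs from the paper's argument at the concentration step.

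\textbf{Common reduction.} Both proofs start from the isotypic decomposition and the observation that multiplicities do not affect the operator norm. Hence $\|\Pi_S-\Pi_G\|_{\mathrm{op},\mathcal H}$ is controlled by $\max_{\pi\neq\mathbf 1}\|\frac{1}{m}\sum_j\pi(g_j^{-1})\|$.

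\textbf{Paper's route.} The paper applies matrix Bernstein separately to each nontrivial irreducible block of dimension $d_\pi$, with $R=1$ and $v=m$. A union bound then gives the prefactor $2\sum_{\pi\neq\mathbf 1}d_\pi$, which it relaxes to $2|G|$ via $\sum d_\pi\le\sum d_\pi^2$.

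\textbf{Your route.} You re-embed all blocks into the left regular representation and apply Bernstein once, in dimension $|G|$, to $X_j=\lambda(g_j^{-1})(I-P)$. Your checks are all right: $\mathbb E X_j=0$, $R=1$, $v=m$, the dimension factor $2|G|$, and the deterministic bound $\|\Pi_S(I-\Pi_G)\|\le 1$. The constants come out identical to the paper's.

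\textbf{Comparison.} Your approach uses a single concentration step with no union bound. Its reduction $\|U(a)\|\le\|\lambda(a)\|$ also never needs to know which irreducibles occur in $\mathcal H$. Conversely, $\lambda$ contains each $\pi$ with multiplicity $d_\pi$, so your dimension factor is intrinsically $\sum_\pi d_\pi^2=|G|$. The paper's blockwise union bound yields the sharper intermediate factor $\sum_{\pi\neq\mathbf 1}d_\pi\le\sqrt{k|G|}$, where $k$ is the number of conjugacy classes. This can be much smaller than $|G|$ for nonabelian groups; the two factors essentially agree for abelian groups. The paper discards this advantage in its final statement, so nothing is lost relative to the proposition as stated.

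\textbf{One addition.} Since $\mathcal H$ is a real Hilbert space, you should complexify before invoking complex irreducibles and $\mathbb C[G]$. Note, as the paper does, that complexification preserves operator norms.
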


Proposition~\ref{prop:spectral-approximation} is an operator-norm statement
whose high-probability event depends only on $S$ and holds for every unitary
representation of $G$, including infinite-dimensional ones. The
gradient-transfer bound below additionally requires the finite,
space-dependent constants $C_{\mathcal H}$ and $B_{\mathcal H}$ from
Assumption~\ref{ass:gradient-rkhs}. The spectral proof, based on irreducible
representations and random Cayley graphs, appears in the appendix; see also
\citep{tahmasebi2025achieving}.

Combining this spectral estimate with
Assumption~\ref{ass:gradient-rkhs} gives a uniform approximation of the fully
augmented gradient field.

\begin{theorem}[Uniform sparse-to-full gradient approximation]
\label{thm:uniform-gradient-approximation}
Suppose Assumption~\ref{ass:gradient-rkhs} holds. Then, with probability at
least $1-\delta$ over the one-time draw of $S$,
\begin{equation}
\sup_{\theta\in\Theta}
\left\|
\nabla\mathcal{R}_n^{S}(\theta)
-
\nabla\mathcal{R}_n^{G}(\theta)
\right\|
\leq
C_{\mathcal H}B_{\mathcal H}
\sqrt{
\frac{8}{3m}
\log\!\left(\frac{2|G|}{\delta}\right)
}.
\label{eq:uniform-gradient-approximation}
\end{equation}
\end{theorem}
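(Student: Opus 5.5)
The plan is to reduce the gradient discrepancy at every $\theta$ to the action of the single random operator $\Pi_S-\Pi_G$ on the RKHS functions $h_{i,\theta}$. Then Proposition~\ref{prop:spectral-approximation} is invoked once, and its event depends only on $S$ and not on $\theta$. This is exactly what neutralizes the dependence between $S$ and the optimization trajectory.

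\emph{Step 1 (pointwise representation).} Fix $i$ and $\theta$. By \eqref{eq:induced-representation}, $(U_{g^{-1}}h_{i,\theta})(x)=h_{i,\theta}(g\cdot x)$, so
\begin{equation*}
\nabla_\theta\ell_i(g\cdot x_i;\theta)=(U_{g^{-1}}h_{i,\theta})(x_i).
\end{equation*}
Averaging over $g_1,\dots,g_m$ gives $\frac{1}{m}\sum_j\nabla_\theta\ell_i(g_j\cdot x_i;\theta)=(\Pi_S h_{i,\theta})(x_i)$. Averaging over $G$ gives $\frac{1}{|G|}\sum_{g}(U_{g^{-1}}h_{i,\theta})(x_i)=(\Pi_G h_{i,\theta})(x_i)$, because inversion is a bijection of $G$. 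Gradients pass through the finite sums, so
\begin{equation*}
\nabla\mathcal R_n^S(\theta)-\nabla\mathcal R_n^G(\theta)=\frac1n\sum_{i=1}^n\bigl((\Pi_S-\Pi_G)h_{i,\theta}\bigr)(x_i).
\end{equation*}

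\emph{Step 2 (norm chain).} Apply the triangle inequality, then the point-evaluation bound \eqref{eq:rkhs-linfty-comparison}. The function $(\Pi_S-\Pi_G)h_{i,\theta}$ lies in $\mathcal H$ because the $U_g$ are bounded operators on $\mathcal H$, so the bound applies to it and gives $\|((\Pi_S-\Pi_G)h_{i,\theta})(x_i)\|\le C_{\mathcal H}\|(\Pi_S-\Pi_G)h_{i,\theta}\|_{\mathcal H}$. The definition of operator norm then bounds this by $C_{\mathcal H}\|\Pi_S-\Pi_G\|_{\mathrm{op},\mathcal H}\|h_{i,\theta}\|_{\mathcal H}$. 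Summing over $i$ and using \eqref{eq:uniform-rkhs-bound} yields, for every $\theta\in\Theta$ simultaneously,
\begin{equation*}
\|\nabla\mathcal R_n^S(\theta)-\nabla\mathcal R_n^G(\theta)\|\le C_{\mathcal H}B_{\mathcal H}\|\Pi_S-\Pi_G\|_{\mathrm{op},\mathcal H}.
\end{equation*}

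\emph{Step 3 (probability).} On the event of Proposition~\ref{prop:spectral-approximation}, which has probability at least $1-\delta$ and depends on $S$ alone, the operator norm is at most the minimum in \eqref{eq:spectral-operator-bound}. That minimum is at most the square-root term. Taking the supremum over $\theta$ after fixing the event gives \eqref{eq:uniform-gradient-approximation}, with no union bound over $\Theta$ needed.

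The main point requiring care is Step 1: the inverse convention in $\Pi_S$ must match the augmentation $g\cdot x_i$, and $\Pi_G$ must be reindexed via $g\mapsto g^{-1}$. One should also confirm that the vector-valued evaluation norm in \eqref{eq:rkhs-linfty-comparison} is the Euclidean norm used in the gradient bound. Everything else is a direct chain of inequalities.
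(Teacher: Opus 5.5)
Your proposal is correct and follows the paper's proof essentially step for step. The paper also writes the gradient gap as $\frac{1}{n}\sum_i [(\Pi_S-\Pi_G)h_{i,\theta}](x_i)$, using the inverse-aligned $\Pi_S$ and reindexing $\Pi_G$ under $g\mapsto g^{-1}$, and then chains the triangle inequality, the point-evaluation bound, the operator-norm bound, and the uniform average RKHS bound on the single $S$-dependent event of the spectral approximation result, so no union bound over $\Theta$ is needed.
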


The uniformity in
Theorem~\ref{thm:uniform-gradient-approximation} is essential. It ensures
that the approximation holds simultaneously at every possible iterate of
Algorithm~\ref{alg:sparse-aug}, even though the entire optimization trajectory
depends on $S$.

\subsection{Convergence of one-shot sparse GD}

We next impose the standard regularity conditions required for nonconvex
gradient descent.

\begin{assumption}[Uniform initial optimality gap]
\label{ass:initial-gap}
There exists $\Delta<\infty$ such that, for every sampled multiset $S$,
\begin{equation}
\mathcal{R}_n^{S}(\theta_0)
-
\inf_{\theta\in\Theta}
\mathcal{R}_n^{S}(\theta)
\leq
\Delta.
\label{eq:uniform-initial-gap}
\end{equation}
\end{assumption}

\begin{theorem}[Convergence of one-shot sparse GD]
\label{thm:main}
Suppose Assumptions~\ref{ass:gradient-rkhs}
and~\ref{ass:initial-gap} hold. Run
Algorithm~\ref{alg:sparse-aug} with $\eta_t=1/L$ for every $t$. Then, with probability at
least $1-\delta$,
\begin{equation}
\left\|
\nabla\mathcal{R}_n^{G}(\widehat{\theta})
\right\|
\leq
\sqrt{\frac{2L\Delta}{T}}
+
C_{\mathcal H}B_{\mathcal H}
\sqrt{
\frac{8}{3m}
\log\!\left(\frac{2|G|}{\delta}\right)
}.
\label{eq:main-convergence-bound}
\end{equation}
Consequently, choosing
\begin{equation}
T
\geq
\frac{8L\Delta}{\epsilon^2}
\qquad\text{and}\qquad
m
\geq
\frac{
32C_{\mathcal H}^2B_{\mathcal H}^2
}{
3\epsilon^2
}
\log\!\left(\frac{2|G|}{\delta}\right)
\label{eq:main-parameter-choice}
\end{equation}
ensures that
\begin{equation}
\left\|
\nabla\mathcal{R}_n^{G}(\widehat{\theta})
\right\|
\leq
\epsilon
\end{equation}
with probability at least $1-\delta$.
\end{theorem}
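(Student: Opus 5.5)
The plan is to combine a deterministic descent argument on the fixed sparse objective $\mathcal{R}_n^{S}$ with the uniform transfer bound of Theorem~\ref{thm:uniform-gradient-approximation}. Let $\mathcal{E}$ be the event, of probability at least $1-\delta$ over the draw of $S$, on which \eqref{eq:uniform-gradient-approximation} holds. The key point is that $\mathcal{E}$ depends only on $S$ and controls the discrepancy at every $\theta\in\Theta$ simultaneously. We can therefore first analyze gradient descent for an arbitrary fixed realization of $S$, and only afterwards intersect with $\mathcal{E}$. No conditioning issues arise from the dependence of $\widehat{\theta}$ on $S$.

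First, fix any realization of $S$. Since $\mathcal{R}_n^{S}$ is $L$-smooth (noted after \eqref{eq:smoothness}), the descent lemma with $\eta_t=1/L$ gives
\begin{equation*}
\mathcal{R}_n^{S}(\theta_{t+1})\le \mathcal{R}_n^{S}(\theta_t)-\frac{1}{2L}\|\nabla\mathcal{R}_n^{S}(\theta_t)\|^2 .
\end{equation*}
Telescoping over $t=0,\ldots,T-1$ and applying Assumption~\ref{ass:initial-gap}, which holds uniformly over $S$, yields $\sum_{t<T}\|\nabla\mathcal{R}_n^{S}(\theta_t)\|^2\le 2L\Delta$. Hence the returned iterate satisfies $\|\nabla\mathcal{R}_n^{S}(\widehat{\theta})\|^2\le \min_t\|\nabla\mathcal{R}_n^{S}(\theta_t)\|^2\le 2L\Delta/T$. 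Here I implicitly assume the iterates remain in $\Theta$, for instance $\Theta=\mathbb{R}^p$ as in the introduction. This is the one point where care is needed: smoothness, the uniform bound, and the infimum in Assumption~\ref{ass:initial-gap} are all stated on $\Theta$, so if $\Theta$ were a proper subset one would need the trajectory to stay inside it. I would state this as the standing convention.

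Second, on $\mathcal{E}$ the triangle inequality gives
\begin{equation*}
\|\nabla\mathcal{R}_n^{G}(\widehat{\theta})\|\le\|\nabla\mathcal{R}_n^{S}(\widehat{\theta})\|+\sup_{\theta\in\Theta}\|\nabla\mathcal{R}_n^{S}(\theta)-\nabla\mathcal{R}_n^{G}(\theta)\|,
\end{equation*}
and inserting the two bounds yields \eqref{eq:main-convergence-bound}. For the parameter choice \eqref{eq:main-parameter-choice}, $T\ge 8L\Delta/\epsilon^2$ makes the first term at most $\epsilon/2$. Similarly, $m\ge 32C_{\mathcal H}^2B_{\mathcal H}^2\log(2|G|/\delta)/(3\epsilon^2)$ makes $\frac{8}{3m}\log(2|G|/\delta)\le \epsilon^2/(4C_{\mathcal H}^2B_{\mathcal H}^2)$, so the second term is at most $\epsilon/2$. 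Summing gives the bound $\epsilon$ on $\mathcal{E}$.

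The main obstacle is not in this argument itself. It is already absorbed into Theorem~\ref{thm:uniform-gradient-approximation}, whose uniformity over $\theta$ is exactly what makes it legitimate to evaluate the discrepancy at the $S$-dependent point $\widehat{\theta}$.
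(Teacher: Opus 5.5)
Your proposal is correct and follows the paper's proof essentially step for step. The nonconvex GD rate (descent lemma, telescoping with $\eta=1/L$, and Assumption~\ref{ass:initial-gap}) bounds $\|\nabla\mathcal{R}_n^{S}(\widehat{\theta})\|$ by $\sqrt{2L\Delta/T}$ for every realization of $S$; the triangle inequality on the event of Theorem~\ref{thm:uniform-gradient-approximation} then transfers this to $\mathcal{R}_n^{G}$, and the parameter choices make each term at most $\epsilon/2$. Your caveat that the iterates must stay in $\Theta$ (automatic when $\Theta=\mathbb{R}^p$) is a fair point that the paper leaves implicit, since its GD lemma is stated for functions on all of $\mathbb{R}^p$.
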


Theorem~\ref{thm:main} gives the group-oracle complexity
\begin{equation}
m
=
\mathcal{O}\left(
\frac{
C_{\mathcal H}^2B_{\mathcal H}^2
\bigl(\log |G|+\log(1/\delta)\bigr)
}{
\epsilon^2
}
\right).
\label{eq:main-oracle-complexity}
\end{equation}
In particular, when the regularity constants are suppressed, one-shot sparse
GD requires
\begin{equation}
\mathcal{O}\left(
\frac{\log |G|+\log(1/\delta)}{\epsilon^2}
\right)
\end{equation}
group-oracle calls and
$\mathcal{O}(1/\epsilon^2)$ optimization iterations. All group transformations
are sampled before optimization and subsequently reused, so no additional
group-oracle calls are required during training.

\section{Proof Sketch}
\label{sec:proof-overview}

This section outlines the main ideas behind
Theorem~\ref{thm:main}. Complete proofs, including the spectral approximation
result and all auxiliary lemmas, are deferred to the appendix.

Let $D_S\coloneqq\Pi_S-\Pi_G$ denote the difference between the sampled and
full group-averaging operators. Since $\Pi_G$ is the orthogonal projection
onto the invariant subspace, $D_S$ is supported on the nontrivial
representation components.

To bound the empirical deviation operator $D_S$, we use the fact that,
although $\mathcal H$ may be infinite-dimensional, the representation of the
finite group $G$ decomposes into finite-dimensional irreducible
representations. Each irreducible representation may occur with an arbitrary
multiplicity, but the corresponding matrix block is merely repeated, so its
multiplicity does not affect the operator norm. We may therefore apply
finite-dimensional matrix concentration to the finitely many distinct
irreducible blocks and then take a union bound. The resulting event depends
only on $S$ and controls every unitary representation of $G$ simultaneously,
even one selected after observing $S$. This gives, with probability at least
$1-\delta$,
\begin{equation}
\|D_S\|_{\mathrm{op},\mathcal H}
\leq
\sqrt{
\frac{8}{3m}
\log\!\left(\frac{2|G|}{\delta}\right)
}.
\label{eq:proof-overview-spectral}
\end{equation}
Crucially, \eqref{eq:proof-overview-spectral} controls an operator norm and hence holds
simultaneously for every gradient function in $\mathcal H$.
Using the group-averaging operators, the difference between the sparse and
fully augmented gradients can be expressed as
\begin{equation}
\nabla\mathcal R_n^S(\theta)
-
\nabla\mathcal R_n^G(\theta)
=
\frac{1}{n}
\sum_{i=1}^n
\bigl(D_S h_{i,\theta}\bigr)(x_i),
\label{eq:proof-overview-gradient-identity}
\end{equation}

The point-evaluation and RKHS-norm bounds in
Assumption~\ref{ass:gradient-rkhs}, together with
\eqref{eq:proof-overview-spectral}, give
\begin{equation}
\sup_{\theta\in\Theta}
\left\|
\nabla\mathcal R_n^S(\theta)
-
\nabla\mathcal R_n^G(\theta)
\right\|
\leq
C_{\mathcal H}B_{\mathcal H}
\sqrt{
\frac{8}{3m}
\log\!\left(\frac{2|G|}{\delta}\right)
}.
\label{eq:proof-overview-uniform}
\end{equation}

This uniformity is essential: every iterate depends on $S$, so a pointwise
concentration bound at a parameter chosen independently of $S$ would not
suffice.

By \eqref{eq:smoothness}, the sparse objective is $L$-smooth. The standard
descent estimate for gradient descent with $\eta_t=1/L$, together with
Assumption~\ref{ass:initial-gap}, gives
\begin{equation}
\min_{0\leq t<T}
\left\|
\nabla\mathcal R_n^S(\theta_t)
\right\|
\leq
\sqrt{\frac{2L\Delta}{T}}.
\label{eq:proof-overview-sparse-stationarity}
\end{equation}
Since Algorithm~\ref{alg:sparse-aug} returns the iterate with the smallest
sparse-gradient norm, \eqref{eq:proof-overview-uniform} and the triangle
inequality yield
\begin{align}
\left\|
\nabla\mathcal R_n^G(\widehat{\theta})
\right\|
&\leq
\left\|
\nabla\mathcal R_n^S(\widehat{\theta})
\right\|
+
\left\|
\nabla\mathcal R_n^G(\widehat{\theta})
-
\nabla\mathcal R_n^S(\widehat{\theta})
\right\|
\nonumber\\
&\leq
\sqrt{\frac{2L\Delta}{T}}
+
C_{\mathcal H}B_{\mathcal H}
\sqrt{
\frac{8}{3m}
\log\!\left(\frac{2|G|}{\delta}\right)
}.
\label{eq:proof-overview-final}
\end{align}
The first term is the usual finite-iteration optimization error; the second
is the error from replacing the full group average by its fixed sparse
approximation. The choices of $T$ and $m$ in
Theorem~\ref{thm:main} make both terms at most $\epsilon/2$, completing the
argument. The appendix supplies the full spectral and optimization proofs.

\section{Experiments}
\label{sec:experiment}

In this section, we provide a small-scale experiment to validate our theory. We consider a permutation-invariant regression
experiment for which the fully augmented objective can be evaluated exactly
throughout training. The predictor is constructed from a Gaussian kernel,
while a factorized parameterization makes the optimization problem nonconvex.

\subsection{Permutation-invariant sum regression}

Let $G=S_6$ act on $\mathbb{R}^6$ by coordinate permutations, so that
$|G|=6!=720$. Each input has independent coordinates sampled uniformly from
$[-1,1]$ and is then sorted in increasing order. The regression target is the
plain coordinate sum
\begin{equation}
y(x)=\sum_{j=1}^{6}x_j.
\end{equation}
This target is exactly permutation invariant:
$y(g\cdot x)=y(x)$ for every $g\in G$. Sorting provides a canonical
representation during training, however, and therefore allows an unaugmented
predictor to fit the observed ordering without learning the desired behavior
over the complete permutation orbit. We independently generate 128 training
samples and 256 test samples.

We use the Gaussian kernel
\begin{equation}
k_\sigma(x,z)
=
\exp\left(-\frac{\|x-z\|_2^2}{2\sigma^2}\right),
\qquad \sigma=1.25.
\end{equation}
Coordinate permutations preserve Euclidean distance and hence act unitarily on
the corresponding Gaussian RKHS. We draw 160 fixed kernel centers
$\{z_r\}_{r=1}^{160}$ independently and uniformly from $[-1,1]^6$, without
sorting them, and use the factorized model
\begin{equation}
f_{a,b}(x)
=
\sum_{r=1}^{160}a_rb_r k_\sigma(x,z_r).
\end{equation}
The loss is
$\ell(x,y;a,b)=\frac12(f_{a,b}(x)-y)^2$. We optimize $a$ and $b$ jointly and
project both vectors onto $[-3,3]^{160}$ after every update. The bilinear
coefficients $a_rb_r$ make the parameterized empirical risk nonconvex. The
Gaussian-kernel construction and bounded parameter domain also provide the
controlled function space used in our theory.

\subsection{Methods and evaluation protocol}

We compare no augmentation; full-group GD, which averages over all 720
permutations; streaming group-SGD with one fresh permutation per iteration
($b=1$); and one-shot sparse GD with the three predeclared subset sizes
\begin{equation}
|S|\in\{4,16,64\}.
\end{equation}
Streaming and one-shot transformations are drawn uniformly with replacement.
All methods use the same initialization, full data batches, a common step size
of $0.05$, and 500 iterations. We repeat the comparison over 10 independent
seeds. Within each seed, all
methods share the data, kernel centers, and initialization; these quantities
and the sampled transformations are generated independently across seeds.

Every 25 iterations, we enumerate all of $S_6$ to compute the fully augmented
training risk and full-gradient norm
$\|\nabla\mathcal R_n^G(\theta_t)\|$. We define the permutation-averaged test
risk as
\begin{equation}
\mathcal R_{\mathrm{test}}^G(a,b)
\coloneqq
\frac{1}{n_{\mathrm{test}}|G|}
\sum_{i=1}^{n_{\mathrm{test}}}
\sum_{g\in G}
\ell(g\cdot x_i^{\mathrm{test}},y_i^{\mathrm{test}};a,b).
\end{equation}
Thus, the test loss is averaged over both test samples and all $720$
permutations of each sample.
The identity-order test risk is evaluated separately to diagnose reliance on
the sorted-input representation. Group-oracle calls count fresh permutation
samples: $|S|$ for one-shot augmentation, 720 for full-group GD, and one per
iteration for streaming group-SGD.

\begin{figure}[t]
  \centering
  \IfFileExists{kernel_permutation_trajectories.pdf}{%
    \includegraphics[width=\linewidth]{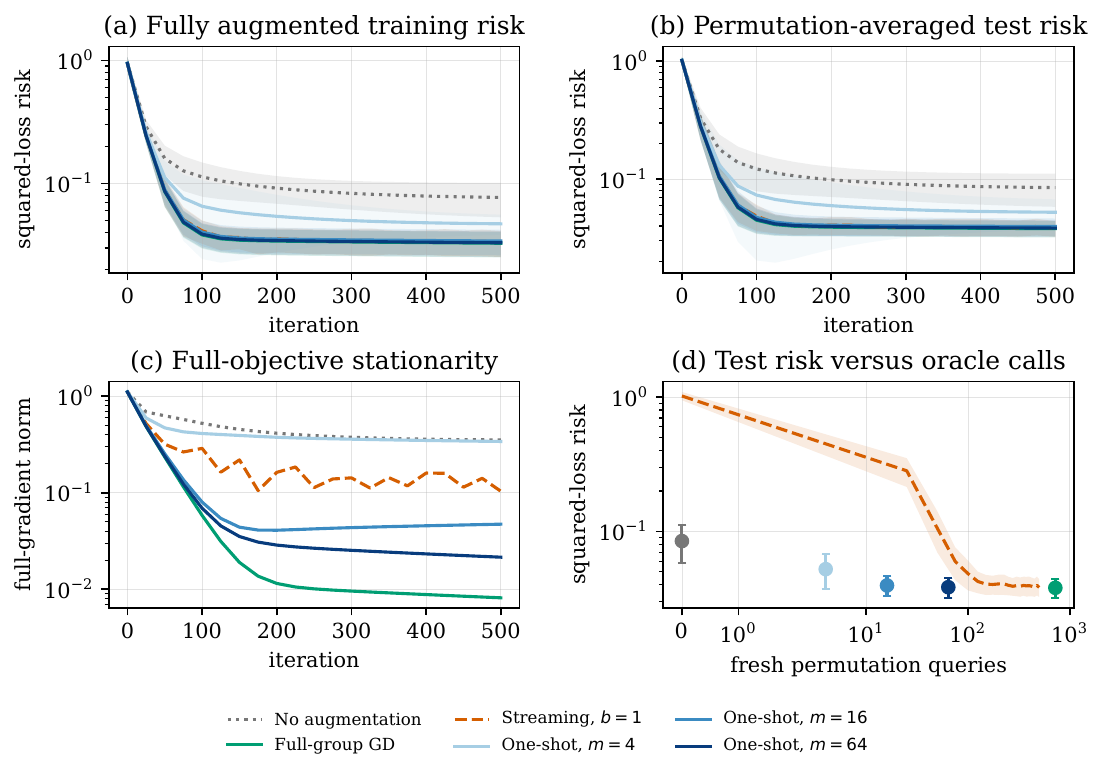}%
  }{\fbox{\parbox{0.9\linewidth}{\centering Gaussian-kernel trajectories unavailable.}}}
  \caption{Sum regression with a Gaussian-kernel predictor under $S_6$.
  \textbf{(a)} Fully augmented training risk. \textbf{(b)}
  Permutation-averaged test risk. \textbf{(c)} Gradient norm of the fully
  augmented objective. \textbf{(d)} Test risk versus fresh permutation
  queries. Curves show means over 10 seeds. Shaded regions and error bars in
  the risk plots represent one standard deviation.}
  \label{fig:kernel-permutation}
\end{figure}

\subsection{Results}

\hyperref[fig:kernel-permutation]{Figure~\ref*{fig:kernel-permutation},
Panels~\textbf{(a)}--\textbf{(b)}} shows that the augmented methods rapidly
reduce both training and test risk relative to the unaugmented predictor.
Increasing the fixed subset from 4 to 16 substantially improves the one-shot
trajectory, while the improvement from 16 to 64 is smaller. With 64 fixed
permutations, one-shot sparse GD closely tracks the test risk of both
full-group GD and streaming group-SGD while using substantially fewer fresh
group samples, as shown in
\hyperref[fig:kernel-permutation]{Figure~\ref*{fig:kernel-permutation},
Panel~\textbf{(d)}}.

\hyperref[fig:kernel-permutation]{Figure~\ref*{fig:kernel-permutation},
Panel~\textbf{(c)}} shows the corresponding behavior for stationarity of the fully
augmented objective. Full-group GD attains the smallest gradient norm, while
the one-shot trajectory moves progressively closer to it as $|S|$ increases.
Streaming group-SGD reaches low test risk, but its full-gradient norm continues
to fluctuate because it uses one fresh transformation per update. Together,
these results illustrate the predicted separation between reusable group
samples and optimization iterations for a group of size 720.

\section{Conclusion}

We studied the optimization of fully augmented empirical-risk objectives
defined by finite transformation groups. We showed that transformations need
not be resampled throughout optimization: a single sparse collection sampled
at initialization can be reused by gradient descent while still providing a
stationarity guarantee for the fully augmented objective. Under our
structural and smoothness assumptions, one-shot sparse gradient descent
requires
$\mathcal{O}\bigl((\log |G|+\log(1/\delta))/\epsilon^2\bigr)$
group-oracle calls to obtain an $\epsilon$-stationary point with probability
at least $1-\delta$, improving on the
$\mathcal{O}(1/\epsilon^4)$ transformation-query complexity of standard
group-SGD. Our analysis combines spectral approximation of group-averaging
operators with uniform control of the augmented gradient field. These results
suggest that previously discovered transformations can be treated as
reusable optimization resources, rather than repeatedly sampled throughout
training.

\section*{Acknowledgements}

BT and MW were partially supported by NSF Awards CBET-2112085 and DMS-2406905, by the Army Research Office (W911NF-26-1-A060), and the Air Force Office of Scientific Research (FA9550261B044). MW acknowledges partial funding from an Alfred P. Sloan Fellowship in Mathematics and the AI2050 program at Schmidt Sciences (G-25-69786).
The U.S. Government is authorized to reproduce and distribute reprints for Governmental purposes notwithstanding any copyright notation thereon. The opinions, findings, views, conclusions or recommendations contained herein are those of the authors and should not be interpreted as necessarily representing the official policies or endorsements, either expressed or implied, of the DAF, AFRL, ARO or the U.S. Government. 

\section*{LLM usage disclosure}
LLM tools were used for copyediting, including
improvements to grammar, wording, clarity, and \LaTeX{} presentation. The
research direction, core ideas, theoretical results, and experimental design
are original contributions of the authors. The authors carefully reviewed and
verified all technical content and take full responsibility for the final
manuscript.

%%%%%%%%%%%%%%%%%%%%%%%%%%%%%%%%%%%%%%%%%%%%%%%%%%%%%%%%%%%%%%%%%%%%%%%%%%%%%%%% 

 % \bibliographystyle{ieeetr}
  \bibliographystyle{plainnat}
 \bibliography{ref}

@string{aistats = {Int. Conference on Artificial Intelligence and Statistics (AISTATS)}}

@string{icml = {Int. Conference on Machine Learning (ICML)}}

@string{iclr = {Int. Conference on Learning Representations (ICLR)}}

@string{colt = {Conference on Learning Theory (COLT)}}

@string{neurips = {Advances in Neural Information Processing Systems (NeurIPS)}}

@inproceedings{tahmasebi2025achieving,
  title={Achieving Approximate Symmetry Is Exponentially Easier than Exact Symmetry},
  author={Tahmasebi, Behrooz and Weber, Melanie},
  booktitle=iclr,
  year=2026
}

@article{bronstein2021geometric,
  title={Geometric deep learning: Grids, groups, graphs, geodesics, and gauges},
  author={Bronstein, Michael M and Bruna, Joan and Cohen, Taco and Veli{\v{c}}kovi{\'c}, Petar},
  journal={arXiv preprint arXiv:2104.13478},
  year={2021}
}

@inproceedings{bogatskiy2020lorentz,
  title={Lorentz group equivariant neural network for particle physics},
  author={Bogatskiy, Alexander and Anderson, Brandon and Offermann, Jan and Roussi, Marwah and Miller, David and Kondor, Risi},
  booktitle=icml,
  pages={992--1002},
  year={2020},
  organization={PMLR}
}

@article{zhang2025artificial,
  title={Artificial intelligence for science in quantum, atomistic, and continuum systems},
  author={Zhang, Xuan and Wang, Limei and Helwig, Jacob and Luo, Youzhi and Fu, Cong and Xie, Yaochen and Liu, Meng and Lin, Yuchao and Xu, Zhao and Yan, Keqiang and others},
  journal={Foundations and Trends{\textregistered} in Machine Learning},
  volume={18},
  number={4},
  pages={385--849},
  year={2025},
  publisher={Emerald Publishing Limited}
}

@article{alon1994random,
  title={Random Cayley graphs and expanders},
  author={Alon, Noga and Roichman, Yuval},
  journal={Random Structures \& Algorithms},
  volume={5},
  number={2},
  pages={271--284},
  year={1994},
  publisher={Wiley Online Library}
}

@article{batzner20223,
  title={E(3)-equivariant graph neural networks for data-efficient and accurate interatomic potentials},
  author={Batzner, Simon and Musaelian, Albert and Sun, Lixin and Geiger, Mario and Mailoa, Jonathan P and Kornbluth, Mordechai and Molinari, Nicola and Smidt, Tess E and Kozinsky, Boris},
  journal={Nature communications},
  volume={13},
  number={1},
  pages={2453},
  year={2022},
  publisher={Nature Publishing Group UK London}
}

@article{smidt2021euclidean,
  title={Euclidean symmetry and equivariance in machine learning},
  author={Smidt, Tess E},
  journal={Trends in Chemistry},
  volume={3},
  number={2},
  pages={82--85},
  year={2021},
  publisher={Elsevier}
}

@article{batzner2023advancing,
  title={Advancing molecular simulation with equivariant interatomic potentials},
  author={Batzner, Simon and Musaelian, Albert and Kozinsky, Boris},
  journal={Nature Reviews Physics},
  volume={5},
  number={8},
  pages={437--438},
  year={2023},
  publisher={Nature Publishing Group UK London}
}

@inproceedings{tahmasebi2023exact,
  title={The exact sample complexity gain from invariances for kernel regression},
  author={Tahmasebi, Behrooz and Jegelka, Stefanie},
  booktitle=neurips,
  volume={36},
  pages={55616--55646},
  year={2023}
}

@inproceedings{
soleymani2025learning,
title={Learning with Exact Invariances in Polynomial Time},
author={Ashkan Soleymani and Behrooz Tahmasebi and Stefanie Jegelka and Patrick Jaillet},
booktitle=icml,
year={2025},
}

@inproceedings{shen2022data,
  title={Data augmentation as feature manipulation},
  author={Shen, Ruoqi and Bubeck, S{\'e}bastien and Gunasekar, Suriya},
  booktitle=icml,
  pages={19773--19808},
  year={2022},
  organization={PMLR}
}

@article{lin2024good,
  title={The good, the bad and the ugly sides of data augmentation: An implicit spectral regularization perspective},
  author={Lin, Chi-Heng and Kaushik, Chiraag and Dyer, Eva L and Muthukumar, Vidya},
  journal={Journal of Machine Learning Research},
  volume={25},
  number={91},
  pages={1--85},
  year={2024}
}

@inproceedings{yang2023sample,
  title={Sample efficiency of data augmentation consistency regularization},
  author={Yang, Shuo and Dong, Yijun and Ward, Rachel and Dhillon, Inderjit S and Sanghavi, Sujay and Lei, Qi},
  booktitle={International Conference on Artificial Intelligence and Statistics},
  pages={3825--3853},
  year={2023},
  organization={PMLR}
}

@article{chen2020group,
  title={A group-theoretic framework for data augmentation},
  author={Chen, Shuxiao and Dobriban, Edgar and Lee, Jane H},
  journal={Journal of Machine Learning Research},
  volume={21},
  number={245},
  pages={1--71},
  year={2020}
}

@inproceedings{dao2019kernel,
  title={A kernel theory of modern data augmentation},
  author={Dao, Tri and Gu, Albert and Ratner, Alexander and Smith, Virginia and De Sa, Chris and R{\'e}, Christopher},
  booktitle=icml,
  pages={1528--1537},
  year={2019},
  organization={PMLR}
}

@inproceedings{soleymani2026efficient,
  title={Efficient Learning and Symmetry Discovery under Exact Invariances},
  author={Soleymani, Ashkan and Tahmasebi, Behrooz and Jaillet, Patrick and Jegelka, Stefanie},
  booktitle=colt,
  year={2026}
}

@inproceedings{tahmasebi2026data,
  title={Data augmentation: A {Fourier} analysis perspective},
  author={Tahmasebi, Behrooz and Weber, Melanie and Jegelka, Stefanie},
  booktitle=colt,
  year={2026}
}

@book{Wright_Recht_2022,
 place={Cambridge}, 
 title={Optimization for Data Analysis}, 
 publisher={Cambridge University Press}, 
 author={Wright, Stephen J. and Recht, Benjamin}, 
 year={2022}
 }

@inproceedings{lin2024equivariance,
  title={Equivariance via Minimal Frame Averaging for More Symmetries and Efficiency},
  author={Lin, Yuchao and Helwig, Jacob and Gui, Shurui and Ji, Shuiwang},
  booktitle=icml,
  year={2024}
}

@inproceedings{ma2024canonicalization,
  title={A canonicalization perspective on invariant and equivariant learning},
  author={Ma, George and Wang, Yifei and Lim, Derek and Jegelka, Stefanie and Wang, Yisen},
  booktitle=neurips,
  year={2024}
}

@inproceedings{tahmasebi2025regularity,
  title={Regularity in Canonicalized Models: A Theoretical Perspective},
  author={Tahmasebi, Behrooz and Jegelka, Stefanie},
  booktitle=aistats,
  year={2025}
}

@inproceedings{tahmasebigeneralization,
  title={Generalization Bounds for Canonicalization: A Comparative Study with Group Averaging},
  author={Tahmasebi, Behrooz and Jegelka, Stefanie},
  booktitle=iclr,
  year={2025}
}

@inproceedings{dym2024equivariant,
  title={Equivariant frames and the impossibility of continuous canonicalization},
  author={Dym, Nadav and Lawrence, Hannah and Siegel, Jonathan W},
  booktitle=icml,
  year={2024}
}

@inproceedings{shumaylovlie,
  title={Lie Algebra Canonicalization: Equivariant Neural Operators under arbitrary Lie Groups},
  author={Shumaylov, Zakhar and Zaika, Peter and Rowbottom, James and Sherry, Ferdia and Weber, Melanie and Sch{\"o}nlieb, Carola-Bibiane},
  booktitle=iclr,
  year={2025}
}

@inproceedings{punyframe,
  title={Frame Averaging for Invariant and Equivariant Network Design},
  author={Puny, Omri and Atzmon, Matan and Smith, Edward J and Misra, Ishan and Grover, Aditya and Ben-Hamu, Heli and Lipman, Yaron},
  booktitle=iclr,
  year={2022}
}

@inproceedings{kaba2023equivariance,
  title={Equivariance with learned canonicalization functions},
  author={Kaba, S{\'e}kou-Oumar and Mondal, Arnab Kumar and Zhang, Yan and Bengio, Yoshua and Ravanbakhsh, Siamak},
  booktitle=icml,
  year={2023}
}

@inproceedings{soleymani2025robust,
  title={A Robust Kernel Statistical Test of Invariance: Detecting Subtle Asymmetries},
  author={Soleymani, Ashkan and Tahmasebi, Behrooz and Jegelka, Stefanie and Jaillet, Patrick},
  booktitle=aistats,
  year={2025}
}

@inproceedings{
soleymani2026a,
title={A Unified Framework for Statistical Testing of Invariance},
author={Ashkan Soleymani and Behrooz Tahmasebi and Patrick Jaillet and Stefanie Jegelka},
booktitle={The ICML 2026 Workshop on Hypothesis Testing},
year={2026}
}

@inproceedings{tahmasebi2023sample,
  title={Sample complexity bounds for estimating probability divergences under invariances},
  author={Tahmasebi, Behrooz and Jegelka, Stefanie},
  booktitle=icml,
  year={2024}
}

@inproceedings{chen2023sample,
  title={Sample complexity of probability divergences under group symmetry},
  author={Chen, Ziyu and Katsoulakis, Markos and Rey-Bellet, Luc and Zhu, Wei},
  booktitle=icml,
  year={2023}
}

@inproceedings{
tahmasebi2026adaptive,
title={Adaptive Symmetry Discovery for Dynamical System Identification},
author={Behrooz Tahmasebi and Melanie Weber},
booktitle=icml,
year={2026}
}

@article{tropp2012user,
  title={User-friendly tail bounds for sums of random matrices},
  author={Tropp, Joel A},
  journal={Foundations of computational mathematics},
  volume={12},
  number={4},
  pages={389--434},
  year={2012},
  publisher={Springer}
}

@inproceedings{hanin2021data,
  title={How data augmentation affects optimization for linear regression},
  author={Hanin, Boris and Sun, Yi},
  booktitle=neurips,
  year={2021}
}

@inproceedings{shao2022theory,
  title={A theory of pac learnability under transformation invariances},
  author={Shao, Han and Montasser, Omar and Blum, Avrim},
  booktitle=neurips,
  year={2022}
}

@InProceedings{pmlr-v134-mei21a,
  title = 	 {Learning with invariances in random features and kernel models},
  author =       {Mei, Song and Misiakiewicz, Theodor and Montanari, Andrea},
  booktitle = 	 colt,
  year = 	 {2021}
}

@InProceedings{pmlr-v139-elesedy21a,
  title = 	 {Provably Strict Generalisation Benefit for Equivariant Models},
  author =       {Elesedy, Bryn and Zaidi, Sheheryar},
  booktitle = 	 icml,
  year = 	 {2021}
}

@InProceedings{zhu2021understanding,
  title={Understanding the generalization benefit of model invariance from a data perspective},
  author={Zhu, Sicheng and An, Bang and Huang, Furong},
  booktitle=neurips,
  year={2021}
}

@InProceedings{pmlr-v202-hounie23a,
  title = 	 {Automatic Data Augmentation via Invariance-Constrained Learning},
  author =       {Hounie, Ignacio and Chamon, Luiz F. O. and Ribeiro, Alejandro},
  booktitle = 	icml,
  year = 	 {2023}
}

@InProceedings{benton2020learning,
  title={Learning invariances in neural networks from training data},
  author={Benton, Gregory and Finzi, Marc and Izmailov, Pavel and Wilson, Andrew G},
  booktitle=neurips,
  year={2020}
}

@InProceedings{immer2022invariance,
  title={Invariance learning in deep neural networks with differentiable laplace approximations},
  author={Immer, Alexander and van der Ouderaa, Tycho and R{\"a}tsch, Gunnar and Fortuin, Vincent and van der Wilk, Mark},
  booktitle=neurips,
  year={2022}
}

@InProceedings{kvinge2022ways,
  title={In what ways are deep neural networks invariant and how should we measure this?},
  author={Kvinge, Henry and Emerson, Tegan and Jorgenson, Grayson and Vasquez, Scott and Doster, Tim and Lew, Jesse},
  booktitle=neurips,
  year={2022}
}

@InProceedings{bouchacourt2021grounding,
  title={Grounding inductive biases in natural images: invariance stems from variations in data},
  author={Bouchacourt, Diane and Ibrahim, Mark and Morcos, Ari},
  booktitle=neurips,
  year={2021}
}

@InProceedings{pmlr-v162-bachmann22a,
  title = 	 {How Tempering Fixes Data Augmentation in {B}ayesian Neural Networks},
  author =       {Bachmann, Gregor and Noci, Lorenzo and Hofmann, Thomas},
  booktitle = 	 icml,
  year = 	 {2022}
}

@inproceedings{
finzi2021residual,
title={Residual Pathway Priors for Soft Equivariance Constraints},
author={Marc Anton Finzi and Gregory Benton and Andrew Gordon Wilson},
booktitle=neurips,
year={2021}
}

@inproceedings{
ouderaa2022relaxing,
title={Relaxing Equivariance Constraints with Non-stationary Continuous Filters},
author={Tycho F.A. van der Ouderaa and David W. Romero and Mark van der Wilk},
booktitle=neurips,
year={2022}
}

@inproceedings{ashman2024approximately,
  title={Approximately equivariant neural processes},
  author={Ashman, Matthew and Diaconu, Cristiana and Weller, Adrian and Bruinsma, Wessel and Turner, Richard E},
  booktitle=neurips,
  year={2024}
}

@inproceedings{huang2023approximately,
  title={Approximately equivariant graph networks},
  author={Huang, Ningyuan and Levie, Ron and Villar, Soledad},
  booktitle=neurips,
  year={2023}
}

@inproceedings{
ouderaa2023learning,
title={Learning Layer-wise Equivariances Automatically using Gradients},
author={Tycho F.A. van der Ouderaa and Alexander Immer and Mark van der Wilk},
booktitle=neurips,
year={2023}
}

@InProceedings{pmlr-v151-yeh22b,
  title = 	 { Equivariance Discovery by Learned Parameter-Sharing },
  author =       {Yeh, Raymond A. and Hu, Yuan-Ting and Hasegawa-Johnson, Mark and Schwing, Alexander},
  booktitle = 	 aistats,
  year = 	 {2022}
}

@InProceedings{dehmamy2021automatic,
  title={Automatic symmetry discovery with lie algebra convolutional network},
  author={Dehmamy, Nima and Walters, Robin and Liu, Yanchen and Wang, Dashun and Yu, Rose},
  booktitle=neurips,
  year={2021}
}

@InProceedings{pmlr-v202-yang23n,
  title = 	 {Generative Adversarial Symmetry Discovery},
  author =       {Yang, Jianke and Walters, Robin and Dehmamy, Nima and Yu, Rose},
  booktitle = 	icml,
  year = 	 {2023}
}

@inproceedings{
huh2025a,
title={Discovering Group Structures via Unitary Representation Learning},
author={Dongsung Huh},
booktitle=iclr,
year={2025}
}

@InProceedings{pmlr-v162-park22a,
  title = 	 {Learning Symmetric Embeddings for Equivariant World Models},
  author =       {Park, Jung Yeon and Biza, Ondrej and Zhao, Linfeng and Van De Meent, Jan-Willem and Walters, Robin},
  booktitle = 	 icml,
  year = 	 {2022}
}

@InProceedings{pmlr-v238-karjol24a,
  title = 	 {A Unified Framework for Discovering Discrete Symmetries},
  author =       {Karjol, Pavan and Kashyap, Rohan and Gopalan, Aditya and Prathosh, A. P.},
  booktitle = 	aistats,
  year = 	 {2024}
}

@article{ghadimi2013stochastic,
  title={Stochastic first-and zeroth-order methods for nonconvex stochastic programming},
  author={Ghadimi, Saeed and Lan, Guanghui},
  journal={SIAM journal on optimization},
  volume={23},
  number={4},
  pages={2341--2368},
  year={2013},
  publisher={SIAM}
}

@InProceedings{pmlr-v48-reddi16,
  title = 	 {Stochastic Variance Reduction for Nonconvex Optimization},
  author = 	 {Reddi, Sashank J. and Hefny, Ahmed and Sra, Suvrit and Poczos, Barnabas and Smola, Alex},
  booktitle = 	 icml,
  year = 	 {2016}
}

@InProceedings{pmlr-v70-nguyen17b,
  title = 	 {{SARAH}: A Novel Method for Machine Learning Problems Using Stochastic Recursive Gradient},
  author =       {Lam M. Nguyen and Jie Liu and Katya Scheinberg and Martin Tak{\'a}{\v{c}}},
  booktitle = 	icml,
  year = 	 {2017}
}

@InProceedings{fang2018spider,
  title={Spider: Near-optimal non-convex optimization via stochastic path-integrated differential estimator},
  author={Fang, Cong and Li, Chris Junchi and Lin, Zhouchen and Zhang, Tong},
  booktitle=neurips,
  year={2018}
}

@InProceedings{cutkosky2019momentum,
  title={Momentum-based variance reduction in non-convex sgd},
  author={Cutkosky, Ashok and Orabona, Francesco},
  booktitle=neurips,
  year={2019}
}

@InProceedings{pmlr-v139-li21a,
  title = 	 {PAGE: A Simple and Optimal Probabilistic Gradient Estimator for Nonconvex Optimization},
  author =       {Li, Zhize and Bao, Hongyan and Zhang, Xiangliang and Richtarik, Peter},
  booktitle = 	 icml,
  year = 	 {2021}
}

@InProceedings{ahn2020sgd,
  title={{SGD} with shuffling: optimal rates without component convexity and large epoch requirements},
  author={Ahn, Kwangjun and Yun, Chulhee and Sra, Suvrit},
  booktitle=neurips,
  year={2020}
}

@InProceedings{mishchenko2020random,
  title={Random reshuffling: Simple analysis with vast improvements},
  author={Mishchenko, Konstantin and Khaled, Ahmed and Richt{\'a}rik, Peter},
  booktitle=neurips,
  year={2020}
}

@InProceedings{pmlr-v97-zhou19b,
  title = 	 {Lower Bounds for Smooth Nonconvex Finite-Sum Optimization},
  author =       {Zhou, Dongruo and Gu, Quanquan},
  booktitle = 	 icml,
  year = 	 {2019}
}

@article{weber25,
  title={Geometric machine learning},
  author={Weber, Melanie},
  journal={AI Magazine},
  year={2025}
}

@InProceedings{pmlr-v291-saad25a,
  title = 	 {New Lower Bounds for Non-Convex Stochastic Optimization through Divergence Decomposition},
  author =       {Saad, El Mehdi and Lee, Wei-Cheng and Orabona, Francesco},
  booktitle = 	colt,
  year = 	 {2025}
}

@inproceedings{kiani2024hardness,
  title={On the hardness of learning under symmetries},
  author={Kiani, Bobak T and Le, Thien and Lawrence, Hannah and Jegelka, Stefanie and Weber, Melanie},
  booktitle={International Conference on Learning Representations},
  year={2024}
}

 \clearpage
 \appendix

\section{Additional Related Work}
\label{app:additional-related-work}

\paragraph{Data augmentation and invariant learning.}
The statistical and optimization effects of data augmentation have been
studied in several settings. For linear regression, augmented gradient methods
can be viewed as optimization over time-varying objectives
\citep{hanin2021data}. Invariance also affects PAC learnability
\citep{shao2022theory}, the efficiency of kernel and random-feature methods
\citep{pmlr-v134-mei21a}, generalization under equivariant averaging
\citep{pmlr-v139-elesedy21a}, and transformation-induced sample covers
\citep{zhu2021understanding}. Other works learn or adapt the augmentation
distribution \citep{pmlr-v202-hounie23a,benton2020learning,immer2022invariance},
measure the invariance acquired by a model \citep{kvinge2022ways}, or study
how augmentation interacts with the training distribution and Bayesian
uncertainty \citep{bouchacourt2021grounding,pmlr-v162-bachmann22a}. Our focus
is different: we study the number of group-oracle calls needed to optimize the
fully augmented objective when sampled transformations can be reused.

\paragraph{Alternative mechanisms for enforcing symmetry.}
Frame averaging enforces equivariance by averaging over input-dependent frames
\citep{punyframe}; minimal frame averaging reduces the number of frames needed
\citep{lin2024equivariance}. Canonicalization instead selects an orbit
representative, either directly or through a learned map
\citep{kaba2023equivariance,ma2024canonicalization}, and has been extended to
Lie group actions and neural operators \citep{shumaylovlie}. Continuous
canonicalization may not exist for some actions \citep{dym2024equivariant},
and its statistical behavior can differ from group averaging
\citep{tahmasebigeneralization}. Nevertheless, the end-to-end model may remain
regular even when the canonicalization map is discontinuous
\citep{tahmasebi2025regularity}. We do not modify the architecture or select
orbit representatives. We retain the augmented objective and approximate its
full group average using a fixed sparse sample.

\paragraph{Approximate and data-adaptive equivariance.}
When symmetry is only approximate, hard architectural constraints may be too
restrictive. This motivates residual equivariant pathways
\citep{finzi2021residual}, non-stationary filters
\citep{ouderaa2022relaxing}, approximately equivariant neural processes and
graph networks \citep{ashman2024approximately,huang2023approximately}, and
methods that learn layerwise constraints or parameter-sharing patterns
\citep{ouderaa2023learning,pmlr-v151-yeh22b}. These approaches relax the model
class. In our setting, the model and full objective are unchanged; only the
group average used during optimization is sparsified.

\paragraph{Symmetry testing and discovery.}
Statistical tests can determine whether data support a proposed invariance
\citep{soleymani2025robust,soleymani2026a}, while invariance can improve the
estimation of probability divergences
\citep{chen2023sample,tahmasebi2023sample}. Related methods discover
continuous generators \citep{dehmamy2021automatic}, distribution-preserving
transformations \citep{pmlr-v202-yang23n}, unitary representations
\citep{huh2025a}, symmetric embeddings \citep{pmlr-v162-park22a}, or discrete
symmetry groups \citep{pmlr-v238-karjol24a}. Symmetry discovery can also be
coupled with system identification \citep{tahmasebi2026adaptive}. These works
support our oracle interpretation: acquiring a valid transformation may be
costly, and a discovered transformation is therefore worth retaining and
reusing.

\paragraph{Nonconvex stochastic and finite-sum optimization.}
Classical stochastic methods require $\mathcal O(1/\epsilon^4)$ oracle calls
for smooth nonconvex stationarity under bounded variance
\citep{ghadimi2013stochastic}. Variance-reduced methods improve component-gradient
complexity using finite-sum or recursive estimators
\citep{pmlr-v48-reddi16,pmlr-v70-nguyen17b,fang2018spider,
cutkosky2019momentum,pmlr-v139-li21a}. Random reshuffling provides another
improvement over independent sampling
\citep{mishchenko2020random,ahn2020sgd}, and oracle lower bounds characterize
the limits of nonconvex finite-sum and stochastic optimization
\citep{pmlr-v97-zhou19b,pmlr-v291-saad25a}. These works count stochastic- or
component-gradient evaluations. We instead count newly sampled
transformations. A transformation can be acquired once and reused throughout
optimization, which is the distinction exploited by fixed sparse
augmentation.

\section{Proofs and Technical Background}
\label{app:proofs}

This appendix provides the complete proof of the results stated in
Section~\ref{sec:main-results}. We first review the required background on
finite groups, unitary representations, and group-averaging operators. We
then prove the spectral approximation bound for a random sparse group
average, transfer this estimate to the augmented gradient fields through the
RKHS structure, and complete the convergence proof for one-shot sparse
gradient descent.

\subsection{Finite groups and group actions}
\label{app:groups}

A \emph{finite group} is a finite set $G$ equipped with a binary operation
$(g,h)\mapsto gh$ satisfying closure, associativity, the existence of an
identity element $e\in G$, and the existence of an inverse $g^{-1}\in G$ for
every $g\in G$.

A \emph{left action} of $G$ on a set $\mathcal X$ is a map
\begin{equation}
G\times\mathcal X\to\mathcal X,
\qquad
(g,x)\mapsto g\cdot x,
\end{equation}
such that
\begin{equation}
e\cdot x=x,
\qquad
(gh)\cdot x=g\cdot(h\cdot x)
\end{equation}
for every $g,h\in G$ and $x\in\mathcal X$. In particular, each $g\in G$
induces a bijection $x\mapsto g\cdot x$ whose inverse is
$x\mapsto g^{-1}\cdot x$.

Let $\mathcal H$ be a Hilbert space of functions
$h\colon\mathcal X\to\mathbb R^p$. The action of $G$ on $\mathcal X$ induces
operators $U_g\colon\mathcal H\to\mathcal H$ defined by
\begin{equation}
(U_gh)(x)
\coloneqq
h(g^{-1}\cdot x).
\label{eq:app-induced-action}
\end{equation}
These operators satisfy
\begin{equation}
U_e=I_{\mathcal H},
\qquad
U_gU_h=U_{gh}.
\end{equation}
Thus, $g\mapsto U_g$ is a representation of $G$ on $\mathcal H$.

Under Assumption~\ref{ass:gradient-rkhs}, this representation is unitary:
\begin{equation}
\langle U_gh,U_gh'\rangle_{\mathcal H}
=
\langle h,h'\rangle_{\mathcal H}
\qquad
\text{for all }g\in G
\text{ and }h,h'\in\mathcal H.
\label{eq:app-unitarity}
\end{equation}
Consequently,
\begin{equation}
U_g^*=U_g^{-1}=U_{g^{-1}},
\qquad
\|U_g\|_{\mathrm{op}}=1.
\label{eq:app-unitary-properties}
\end{equation}

\subsection{Representations of finite groups}
\label{app:representation-theory}

We briefly recall the representation-theoretic facts needed below. Although
$\mathcal H$ may be a real Hilbert space, it is convenient to work with its
complexification. All operator-norm estimates obtained after complexification
remain valid on the original real space.

A finite-dimensional complex representation
$\pi\colon G\to\mathcal U(V_\pi)$ is \emph{irreducible} if its only
$G$-invariant subspaces are $\{0\}$ and $V_\pi$. We write
$d_\pi\coloneqq\dim(V_\pi)$ and fix a complete collection $\widehat G$ of
pairwise inequivalent unitary irreducible representations of $G$, with each
$V_\pi$ equipped with a fixed $G$-invariant inner product.

Every unitary representation of a finite group decomposes into isotypic
components: there is a unitary isomorphism
\begin{equation}
W\colon\mathcal H
\longrightarrow
\bigoplus_{\pi\in\widehat G}
\mathcal M_\pi\otimes V_\pi,
\label{eq:app-isotypic-decomposition}
\end{equation}
where $\mathcal M_\pi$ is the Hilbert multiplicity space associated with
$\pi$. Under this unitary change of coordinates,
\begin{equation}
WU_gW^*
=
\bigoplus_{\pi\in\widehat G}
\left(I_{\mathcal M_\pi}\otimes\pi(g)\right).
\label{eq:app-block-representation}
\end{equation}
The multiplicity spaces carry the part of the Hilbert-space structure not
contained in the finite-dimensional $V_\pi$ and may themselves be
infinite-dimensional. Importantly, the action of $G$ is identical on every
copy of the same irreducible representation, so operator-norm control depends
on the distinct irreducible representations rather than their multiplicities.

The trivial representation, denoted $\mathbf 1$, is the one-dimensional
representation satisfying
\begin{equation}
\mathbf 1(g)=1
\qquad
\text{for every }g\in G.
\end{equation}
Its isotypic component is precisely the invariant subspace
\begin{equation}
\mathcal H^G
=
\{h\in\mathcal H:U_gh=h\text{ for every }g\in G\}.
\label{eq:app-invariant-subspace}
\end{equation}

We use two standard orthogonality identities. For any nontrivial
$\pi\in\widehat G$,
\begin{equation}
\frac{1}{|G|}\sum_{g\in G}\pi(g)=0,
\label{eq:app-nontrivial-average}
\end{equation}
whereas for the trivial representation,
\begin{equation}
\frac{1}{|G|}\sum_{g\in G}\mathbf 1(g)=1.
\end{equation}
We also use the dimension identity
\begin{equation}
\sum_{\pi\in\widehat G}d_\pi^2=|G|.
\label{eq:app-dimension-identity}
\end{equation}
In particular,
\begin{equation}
\sum_{\substack{\pi\in\widehat G\\\pi\neq\mathbf 1}}d_\pi
\leq
\sum_{\substack{\pi\in\widehat G\\\pi\neq\mathbf 1}}d_\pi^2
=
|G|-1
\leq |G|.
\label{eq:app-dimension-bound}
\end{equation}

\subsection{Full and sampled averaging operators}
\label{app:averaging-operators}

Define the full group-averaging operator
\begin{equation}
\Pi_G
\coloneqq
\frac{1}{|G|}\sum_{g\in G}U_g.
\label{eq:app-full-operator}
\end{equation}

\begin{lemma}[Full averaging is an orthogonal projection]
\label{lem:app-full-projection}
The operator $\Pi_G$ is the orthogonal projection from $\mathcal H$ onto
$\mathcal H^G$.
\end{lemma}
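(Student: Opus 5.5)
To show that $\Pi_G$ is the orthogonal projection onto $\mathcal H^G$, I will verify three properties directly from unitarity \eqref{eq:app-unitary-properties} and the group law $U_gU_h=U_{gh}$: (i) $\Pi_G$ maps $\mathcal H$ into $\mathcal H^G$; (ii) $\Pi_G$ acts as the identity on $\mathcal H^G$; and (iii) $\Pi_G$ is self-adjoint. A bounded operator that is idempotent and self-adjoint is the orthogonal projection onto its range. Properties (i) and (ii) give idempotence and identify the range as exactly $\mathcal H^G$.

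For (i), I fix $h'\in G$ and compute
\[
U_{h'}\Pi_G=\frac{1}{|G|}\sum_{g\in G}U_{h'g}.
\]
Left multiplication $g\mapsto h'g$ is a bijection of $G$, so the sum is reindexed back to $\Pi_G$. Hence $U_{h'}\Pi_G h=\Pi_G h$ for every $h\in\mathcal H$, which means $\Pi_G h\in\mathcal H^G$.

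For (ii), if $h\in\mathcal H^G$, then every term satisfies $U_gh=h$, so $\Pi_G h=h$. Combining this with (i) gives $\Pi_G^2=\Pi_G$ and $\operatorname{range}(\Pi_G)=\mathcal H^G$; the inclusion $\supseteq$ holds because every $h\in\mathcal H^G$ equals $\Pi_G h$.

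For (iii), I use $U_g^*=U_{g^{-1}}$ to write
\[
\Pi_G^*=\frac{1}{|G|}\sum_{g\in G}U_{g^{-1}}.
\]
Reindexing by the bijection $g\mapsto g^{-1}$ returns $\Pi_G$.

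To conclude, I will show $\ker\Pi_G\perp\operatorname{range}\Pi_G$ explicitly rather than citing the general fact. For any $h\in\mathcal H$, write $h=\Pi_G h+(I-\Pi_G)h$. For any $u\in\mathcal H^G$,
\[
\langle (I-\Pi_G)h,u\rangle=\langle h,(I-\Pi_G)u\rangle=0,
\]
using self-adjointness and $\Pi_G u=u$. So $\Pi_G h$ is the orthogonal projection of $h$ onto $\mathcal H^G$.

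There is no real obstacle in this argument. The only points requiring care are that $\mathcal H^G$ is closed, which is automatic because it is the intersection of the kernels of the bounded operators $U_g-I$, and the two reindexing bijections.
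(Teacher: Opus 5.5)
Your proof is correct and follows essentially the same route as the paper: you show the range is invariant via $U_a\Pi_G=\Pi_G$, that $\Pi_G h=h$ on $\mathcal H^G$, and self-adjointness via the inversion bijection. The only cosmetic differences are that you deduce $\Pi_G^2=\Pi_G$ from (i) and (ii), whereas the paper counts the pairs $(g,h)$ with $gh=k$, and that you verify $(I-\Pi_G)h\perp\mathcal H^G$ explicitly rather than invoking the characterization of orthogonal projections as self-adjoint idempotents.
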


\begin{proof}
First, using the group property,
\begin{align}
\Pi_G^2
&=
\frac{1}{|G|^2}
\sum_{g,h\in G}U_{gh}.
\end{align}
For each $k\in G$, exactly $|G|$ ordered pairs $(g,h)$ satisfy $gh=k$.
Therefore,
\begin{equation}
\Pi_G^2
=
\frac{1}{|G|}
\sum_{k\in G}U_k
=
\Pi_G.
\end{equation}
Moreover, by \eqref{eq:app-unitary-properties} and invariance of the group
under inversion,
\begin{equation}
\Pi_G^*
=
\frac{1}{|G|}\sum_{g\in G}U_g^*
=
\frac{1}{|G|}\sum_{g\in G}U_{g^{-1}}
=
\Pi_G.
\end{equation}
Hence, $\Pi_G$ is an orthogonal projection.

For every $a\in G$,
\begin{equation}
U_a\Pi_G
=
\frac{1}{|G|}\sum_{g\in G}U_{ag}
=
\Pi_G,
\end{equation}
so $\operatorname{range}(\Pi_G)\subseteq\mathcal H^G$. Conversely, if
$h\in\mathcal H^G$, then
\begin{equation}
\Pi_Gh
=
\frac{1}{|G|}\sum_{g\in G}U_gh
=
h.
\end{equation}
Thus, $\operatorname{range}(\Pi_G)=\mathcal H^G$.
\end{proof}

Let $g_1,\ldots,g_m$ be independent uniform samples from $G$. Define
\begin{equation}
\Pi_S
\coloneqq
\frac{1}{m}\sum_{j=1}^mU_{g_j^{-1}}.
\label{eq:app-sampled-operator}
\end{equation}
Since
\begin{equation}
\mathbb E[U_{g_j^{-1}}]
=
\frac{1}{|G|}\sum_{g\in G}U_g
=
\Pi_G,
\end{equation}
the sampled operator is an unbiased estimator of $\Pi_G$:
\begin{equation}
\mathbb E[\Pi_S]=\Pi_G.
\end{equation}

Under the isotypic decomposition
\eqref{eq:app-isotypic-decomposition}, the two averaging operators take the
forms
\begin{align}
W\Pi_GW^*
&=
I_{\mathcal M_{\mathbf 1}}
\oplus
\bigoplus_{\substack{\pi\in\widehat G\\\pi\neq\mathbf 1}}
0_{\mathcal M_\pi\otimes V_\pi},
\label{eq:app-full-blocks}
\\
W\Pi_SW^*
&=
I_{\mathcal M_{\mathbf 1}}
\oplus
\bigoplus_{\substack{\pi\in\widehat G\\\pi\neq\mathbf 1}}
\left(
I_{\mathcal M_\pi}
\otimes
\frac{1}{m}\sum_{j=1}^m\pi(g_j^{-1})
\right).
\label{eq:app-sampled-blocks}
\end{align}
For an operator $A\colon V_\pi\to V_\pi$, let
$\|A\|_{\mathrm{op},V_\pi}
\coloneqq\sup_{v\neq0}\|Av\|_{V_\pi}/\|v\|_{V_\pi}$.
Unitary conjugation preserves operator norms, the norm of an orthogonal
direct sum is the supremum of its block norms, and
$\|I_{\mathcal M_\pi}\otimes A\|_{\mathrm{op}}=
\|A\|_{\mathrm{op},V_\pi}$ even when $\mathcal M_\pi$ is
infinite-dimensional. It follows that
\begin{equation}
\|\Pi_S-\Pi_G\|_{\mathrm{op},\mathcal H}
=
\sup_{\substack{\pi\in\widehat G\\
\pi\neq\mathbf 1,\ \mathcal M_\pi\neq\{0\}}}
\left\|
\frac{1}{m}\sum_{j=1}^m\pi(g_j^{-1})
\right\|_{\mathrm{op},V_\pi}.
\label{eq:app-operator-fourier}
\end{equation}
Thus, it suffices to control the empirical average of each nontrivial
irreducible representation. The Hilbert space $\mathcal H$ affects only which
irreducibles occur and their multiplicities; it does not alter the
finite-dimensional block matrices or their norms.

\subsection{Spectral concentration for sparse group averaging}
\label{app:tw-bound}

We now prove the spectral approximation result used in the main text. The
argument follows the random averaging construction of
\citep{tahmasebi2025achieving}, which is closely related to the
Alon--Roichman theorem for random Cayley graphs
\citep{alon1994random}.

We use the following matrix Bernstein inequality.

\begin{lemma}[Matrix Bernstein inequality \citep{tropp2012user}]
\label{lem:app-matrix-bernstein}
Let $X_1,\ldots,X_m$ be independent, mean-zero, complex
$d_1\times d_2$ random matrices satisfying
$\|X_j\|_{\mathrm{op}}\leq R$ almost surely. Define
\begin{equation}
v
\coloneqq
\max\left\{
\left\|\sum_{j=1}^m\mathbb E[X_jX_j^*]\right\|_{\mathrm{op}},
\left\|\sum_{j=1}^m\mathbb E[X_j^*X_j]\right\|_{\mathrm{op}}
\right\}.
\end{equation}
Then, for every $t\geq0$,
\begin{equation}
\mathbb P\left(
\left\|\sum_{j=1}^mX_j\right\|_{\mathrm{op}}\geq t
\right)
\leq
(d_1+d_2)
\exp\left(
-\frac{t^2/2}{v+Rt/3}
\right).
\label{eq:app-matrix-bernstein}
\end{equation}
\end{lemma}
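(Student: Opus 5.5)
The statement is the classical matrix Bernstein inequality of \citep{tropp2012user}, so the plan is the standard matrix Laplace-transform argument. The rectangular case is reduced to the Hermitian case by dilation.

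\textbf{Step 1 (Hermitian dilation).} For each $j$, I will set
\begin{equation}
Y_j\coloneqq\begin{pmatrix}0 & X_j\\ X_j^* & 0\end{pmatrix},
\end{equation}
which is a Hermitian $(d_1+d_2)\times(d_1+d_2)$ matrix. The $Y_j$ are independent and mean zero, and they satisfy $\|Y_j\|_{\mathrm{op}}=\|X_j\|_{\mathrm{op}}\le R$. A direct computation gives $Y_j^2=\operatorname{diag}(X_jX_j^*,X_j^*X_j)$, so $\|\sum_j\mathbb E Y_j^2\|_{\mathrm{op}}=v$. Moreover,
\begin{equation}
\lambda_{\max}\Bigl(\sum_j Y_j\Bigr)=\Bigl\|\sum_j X_j\Bigr\|_{\mathrm{op}},
\end{equation}
because the spectrum of the dilation is $\pm$ the singular values of $\sum_j X_j$. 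It therefore suffices to bound $\mathbb P(\lambda_{\max}(\sum_j Y_j)\ge t)$ in dimension $d\coloneqq d_1+d_2$.

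\textbf{Step 2 (matrix Laplace transform and subadditivity of cumulants).} For $\theta>0$, Markov's inequality together with $e^{\theta\lambda_{\max}(A)}\le\operatorname{tr}e^{\theta A}$ gives
\begin{equation}
\mathbb P\Bigl(\lambda_{\max}\Bigl(\sum_j Y_j\Bigr)\ge t\Bigr)\le e^{-\theta t}\,\mathbb E\operatorname{tr}\exp\Bigl(\theta\sum_j Y_j\Bigr).
\end{equation}
I will then invoke Lieb's concavity theorem: the map $A\mapsto\operatorname{tr}\exp(H+\log A)$ is concave on positive definite matrices. Combined with Jensen's inequality and iterated conditioning on the independent $Y_j$, this yields
\begin{equation}
\mathbb E\operatorname{tr}\exp\Bigl(\theta\sum_j Y_j\Bigr)\le\operatorname{tr}\exp\Bigl(\sum_j\log\mathbb E e^{\theta Y_j}\Bigr).
\end{equation}
This step is the main obstacle, since it is where noncommutativity bites: $e^{A+B}\neq e^Ae^B$ in general, and Golden--Thompson only handles two summands. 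I would not reprove Lieb's theorem but cite it, as \citep{tropp2012user} does.

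\textbf{Step 3 (mgf bound and optimization).} For $0<\theta<3/R$, the scalar function $f(x)=(e^{x}-1-x)/x^2$ is increasing, and a Taylor bound gives $f(x)\le \tfrac{1}{2}(1-x/3)^{-1}$ for $0\le x<3$. Applying this through the transfer rule to $\theta Y_j$, whose eigenvalues are at most $\theta R$, gives
\begin{equation}
e^{\theta Y_j}\preceq I+\theta Y_j+g(\theta)Y_j^2,\qquad g(\theta)\coloneqq\frac{\theta^2/2}{1-R\theta/3}.
\end{equation}
Taking expectations, using $\mathbb E Y_j=0$, and applying $I+A\preceq e^{A}$ yields $\mathbb E e^{\theta Y_j}\preceq\exp(g(\theta)\mathbb E Y_j^2)$.

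Since the logarithm is operator monotone and the trace exponential is monotone, it follows that
\begin{equation}
\operatorname{tr}\exp\Bigl(\sum_j\log\mathbb E e^{\theta Y_j}\Bigr)\le\operatorname{tr}\exp\Bigl(g(\theta)\sum_j\mathbb E Y_j^2\Bigr)\le d\,e^{g(\theta)v}.
\end{equation}
Hence the tail probability is at most $d\,\exp(-\theta t+g(\theta)v)$. Choosing $\theta=t/(v+Rt/3)$, which lies in $(0,3/R)$, gives exactly the exponent $-\tfrac{t^2/2}{v+Rt/3}$ and completes the bound.
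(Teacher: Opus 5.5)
Your proposal is correct. The paper gives no proof of this lemma and simply imports it from \citep{tropp2012user}; your argument (Hermitian dilation, Lieb-based subadditivity of the matrix cumulant generating functions, the Bernstein mgf bound $\mathbb{E}e^{\theta Y_j}\preceq\exp(g(\theta)\mathbb{E}Y_j^2)$, and the choice $\theta=t/(v+Rt/3)$) is exactly the proof in that reference. One small nit: this $\theta$ lies in $(0,3/R)$ only when $t>0$ and $v>0$, but the remaining cases are trivial, since $t=0$ makes the right-hand side at least $1$ and $v=0$ forces every $X_j=0$ almost surely.
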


\begin{theorem}[Random sparse approximation of group averaging]
\label{thm:app-tw}
Let $g_1,\ldots,g_m$ be independent uniform samples from a finite group $G$.
For every $0<\delta<1$, with probability at least $1-\delta$,
\begin{equation}
\|\Pi_S-\Pi_G\|_{\mathrm{op},\mathcal H}
\leq
\min\left\{
1,\,
\sqrt{
\frac{8}{3m}
\log\left(\frac{2|G|}{\delta}\right)
}
\right\},
\label{eq:app-tw-explicit}
\end{equation}
Moreover, this event depends only on $S$, and the bound holds for the
corresponding averaging operators on every possibly infinite-dimensional
Hilbert space carrying a unitary representation of $G$.
\end{theorem}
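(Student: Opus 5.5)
We reduce the claim to finitely many finite-dimensional matrix concentration problems using the block formula \eqref{eq:app-operator-fourier}. That identity shows that $\|\Pi_S-\Pi_G\|_{\mathrm{op},\mathcal H}$ is at most
\begin{equation*}
M(S)\coloneqq\max_{\pi\in\widehat G,\ \pi\neq\mathbf 1}\Bigl\|\frac1m\sum_{j=1}^m\pi(g_j^{-1})\Bigr\|_{\mathrm{op},V_\pi}.
\end{equation*}
The quantity $M(S)$ is a function of $S$ alone; it does not depend on $\mathcal H$, which only decides which $\pi$ appear in the supremum. Hence any event of the form $\{M(S)\le \tau\}$ controls every unitary representation simultaneously, including infinite-dimensional ones and ones chosen after seeing $S$. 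This gives the ``moreover'' part of the statement. The trivial bound $M(S)\le 1$ follows because each $\pi(g)$ is unitary, so an average of such matrices has operator norm at most $1$. This accounts for the $\min\{1,\cdot\}$ in the bound.

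Fix a nontrivial $\pi\in\widehat G$ and set $X_j\coloneqq\frac1m\pi(g_j^{-1})$. These are independent $d_\pi\times d_\pi$ complex matrices. Since $g_j^{-1}$ is uniform on $G$, \eqref{eq:app-nontrivial-average} gives $\mathbb E X_j=0$. Unitarity gives $\|X_j\|_{\mathrm{op}}\le R\coloneqq 1/m$ and $X_jX_j^*=X_j^*X_j=m^{-2}I$, so the variance parameter in Lemma~\ref{lem:app-matrix-bernstein} is $v=1/m$. Applying that lemma with $t\le 1$ yields
\begin{equation*}
\mathbb P\bigl(\|\textstyle\sum_j X_j\|_{\mathrm{op}}\ge t\bigr)\le 2d_\pi\exp\Bigl(-\frac{mt^2/2}{1+t/3}\Bigr)\le 2d_\pi\exp\Bigl(-\frac{3mt^2}{8}\Bigr),
\end{equation*}
where the last step uses $1+t/3\le 4/3$.

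Next we take a union bound over nontrivial $\pi$, using \eqref{eq:app-dimension-bound} to control $\sum_{\pi\ne\mathbf 1}2d_\pi\le 2|G|$. This gives $\mathbb P(M(S)\ge t)\le 2|G|e^{-3mt^2/8}$. We choose $t=\sqrt{\tfrac{8}{3m}\log(2|G|/\delta)}$, which makes the right-hand side equal to $\delta$.

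The only point needing care is the restriction $t\le1$ used to absorb the $Rt/3$ term. If the chosen $t$ exceeds $1$, the claimed bound reduces to the deterministic bound $M(S)\le 1$ and holds with probability one. Otherwise the Bernstein estimate above applies. Combining the two cases gives the $\min$ form on a single event of probability at least $1-\delta$.

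A last technical step handles a real $\mathcal H$. We pass to its complexification, where the isotypic decomposition \eqref{eq:app-isotypic-decomposition} holds. The operator norm of a real operator equals that of its complex extension, so the bound transfers back to the original real space.
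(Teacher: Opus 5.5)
Your proposal is correct and follows the paper's proof essentially step for step. You reduce via the block identity \eqref{eq:app-operator-fourier} to the nontrivial irreducible blocks, apply matrix Bernstein to each block with parameters supplied by unitarity, take a union bound using $\sum_{\pi\neq\mathbf 1}d_\pi\le|G|$, and dispose of the case where the chosen threshold exceeds $1$ with the deterministic bound. The only difference is cosmetic: you normalize $X_j=\frac1m\pi(g_j^{-1})$, so $R=v=1/m$, whereas the paper keeps $X_j=\pi(g_j^{-1})$, so $R=1$ and $v=m$, and applies Bernstein at $t=m\tau$; both yield the identical tail $2d_\pi e^{-3m\tau^2/8}$.
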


\begin{proof}
Fix a nontrivial irreducible unitary representation
$\pi\in\widehat G$ of dimension $d_\pi$. Define
\begin{equation}
X_j^{(\pi)}
\coloneqq
\pi(g_j^{-1}).
\end{equation}
By the orthogonality relation \eqref{eq:app-nontrivial-average},
\begin{equation}
\mathbb E[X_j^{(\pi)}]
=
\frac{1}{|G|}\sum_{g\in G}\pi(g^{-1})
=
0.
\end{equation}
Since $\pi$ is unitary,
\begin{equation}
\|X_j^{(\pi)}\|_{\mathrm{op},V_\pi}=1,
\qquad
X_j^{(\pi)}X_j^{(\pi)*}=I_{d_\pi},
\qquad
X_j^{(\pi)*}X_j^{(\pi)}=I_{d_\pi}.
\end{equation}
Thus, the matrix Bernstein parameters satisfy
\begin{equation}
R=1,
\qquad
v=m.
\end{equation}

Applying Lemma~\ref{lem:app-matrix-bernstein} with
$t=m\tau$, for $0<\tau\leq1$, gives
\begin{align}
\mathbb P\left(
\left\|
\frac{1}{m}\sum_{j=1}^m\pi(g_j^{-1})
\right\|_{\mathrm{op},V_\pi}
>\tau
\right)
&\leq
2d_\pi
\exp\left(
-\frac{m^2\tau^2/2}{m+m\tau/3}
\right)
\\
&\leq
2d_\pi
\exp\left(
-\frac{3m\tau^2}{8}
\right).
\label{eq:app-one-irrep-tail}
\end{align}
The final inequality uses $\tau\leq1$, and hence
$1+\tau/3\leq4/3$.

Taking a union bound over all nontrivial irreducible representations of $G$
yields
\begin{align}
&\mathbb P\left(
\sup_{\substack{\pi\in\widehat G\\\pi\neq\mathbf 1}}
\left\|
\frac{1}{m}\sum_{j=1}^m\pi(g_j^{-1})
\right\|_{\mathrm{op},V_\pi}
>\tau
\right)
\qquad\leq
2
\sum_{\substack{\pi\in\widehat G\\\pi\neq\mathbf 1}}
d_\pi
\exp\left(-\frac{3m\tau^2}{8}\right).
\end{align}
By \eqref{eq:app-dimension-bound},
\begin{equation}
\mathbb P\left(
\sup_{\substack{\pi\in\widehat G\\\pi\neq\mathbf 1}}
\left\|
\frac{1}{m}\sum_{j=1}^m\pi(g_j^{-1})
\right\|_{\mathrm{op},V_\pi}
>\tau
\right)
\leq
2|G|\exp\left(-\frac{3m\tau^2}{8}\right).
\label{eq:app-union-bound}
\end{equation}
Choosing
\begin{equation}
\tau
=
\sqrt{
\frac{8}{3m}
\log\left(\frac{2|G|}{\delta}\right)
}
\end{equation}
makes the right-hand side of \eqref{eq:app-union-bound} equal to $\delta$.
When this choice satisfies $\tau\leq1$, the conclusion follows from the
block identity \eqref{eq:app-operator-fourier}. If $\tau>1$, the same block
identity and the fact that each block is an average of unitary operators give
the deterministic bound
$\|\Pi_S-\Pi_G\|_{\mathrm{op},\mathcal H}\leq1$.
Because the event in \eqref{eq:app-union-bound} controls every
$\pi\in\widehat G$, it is independent of the choice of $\mathcal H$.
Equation~\eqref{eq:app-operator-fourier} therefore transfers the same event
to every unitary representation of $G$, regardless of its multiplicities.
\end{proof}

\begin{remark}[Why the dependence is logarithmic in $|G|$]
A direct matrix-concentration argument on the entire function space could
produce a factor depending on $\log\dim(\mathcal H)$ and would not apply
directly when $\mathcal H$ is infinite-dimensional. The
representation-theoretic decomposition removes the multiplicities of the
irreducible components. Only the distinct irreducible representations must
be controlled, and their dimensions satisfy
$\sum_{\pi\in\widehat G}d_\pi\leq |G|$. This is what produces the dependence
$\log|G|$.
\end{remark}

\begin{remark}[Connection with random Cayley graphs]
For the left regular representation on $\ell^2(G)$, the operator $\Pi_S$ is
the normalized convolution operator associated with the empirical measure
of $S^{-1}$. If the multiset is symmetrized by including inverses, this operator
is the normalized adjacency operator of an undirected random Cayley
multigraph. The bound in Theorem~\ref{thm:app-tw} controls the nontrivial
spectrum of this graph. The formulation above does not require
symmetrization because matrix Bernstein applies directly to the possibly
non-self-adjoint representation blocks.
\end{remark}

\subsection{Vector-valued RKHS preliminaries}
\label{app:rkhs}

Let $\mathcal H$ be a vector-valued RKHS of functions
$h\colon\mathcal X\to\mathbb R^p$ with operator-valued reproducing kernel
\begin{equation}
K\colon\mathcal X\times\mathcal X
\to\mathbb R^{p\times p}.
\end{equation}
The reproducing property states that
\begin{equation}
\langle h(x),v\rangle_{\mathbb R^p}
=
\langle h,K(\,\cdot\,,x)v\rangle_{\mathcal H}
\label{eq:app-reproducing-property}
\end{equation}
for every $h\in\mathcal H$, $x\in\mathcal X$, and $v\in\mathbb R^p$.

\begin{lemma}[Uniform point-evaluation bound]
\label{lem:app-evaluation}
Suppose
\begin{equation}
\sup_{x\in\mathcal X}
\|K(x,x)\|_{\mathrm{op}}^{1/2}
\leq
C_{\mathcal H}.
\end{equation}
Then
\begin{equation}
\|h\|_{L^\infty}
\leq
C_{\mathcal H}\|h\|_{\mathcal H}
\qquad
\text{for every }h\in\mathcal H.
\label{eq:app-evaluation-bound}
\end{equation}
\end{lemma}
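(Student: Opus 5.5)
The plan is to bound each coordinate direction of $h(x)$ via the reproducing property \eqref{eq:app-reproducing-property} together with Cauchy--Schwarz in $\mathcal H$, and then to take suprema over unit vectors $v$ and over points $x$.

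First, fix $x\in\mathcal X$ and $v\in\mathbb R^p$. The reproducing property gives
\begin{equation*}
\langle h(x),v\rangle_{\mathbb R^p}
=
\langle h,K(\,\cdot\,,x)v\rangle_{\mathcal H}
\leq
\|h\|_{\mathcal H}\,\|K(\,\cdot\,,x)v\|_{\mathcal H}.
\end{equation*}

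The key computation is the norm of the kernel section. Applying the reproducing property again, with $h=K(\,\cdot\,,x)v$, yields
\begin{equation*}
\|K(\,\cdot\,,x)v\|_{\mathcal H}^2
=
\langle K(\,\cdot\,,x)v,K(\,\cdot\,,x)v\rangle_{\mathcal H}
=
\langle K(x,x)v,v\rangle_{\mathbb R^p}
\leq
\|K(x,x)\|_{\mathrm{op}}\|v\|^2
\leq
C_{\mathcal H}^2\|v\|^2 .
\end{equation*}
Here I use the assumed uniform bound on $\|K(x,x)\|_{\mathrm{op}}^{1/2}$. The matrix $K(x,x)$ is positive semidefinite, which ensures the quadratic form is nonnegative and that the square root is meaningful.

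Combining the two displays gives $\langle h(x),v\rangle\leq C_{\mathcal H}\|h\|_{\mathcal H}\|v\|$. Taking the supremum over unit vectors $v$, attained at $v=h(x)/\|h(x)\|$ when $h(x)\neq0$, gives $\|h(x)\|\leq C_{\mathcal H}\|h\|_{\mathcal H}$ for every $x$. Taking the supremum over $x\in\mathcal X$ then yields \eqref{eq:app-evaluation-bound}, where $\|h\|_{L^\infty}$ is read as $\sup_x\|h(x)\|$ with the Euclidean norm.

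No step is a real obstacle. The only point needing care is identifying $\|K(\,\cdot\,,x)v\|_{\mathcal H}^2$ with the quadratic form $\langle K(x,x)v,v\rangle$, which is a direct second use of the reproducing property.
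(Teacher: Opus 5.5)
Your proposal is correct and follows essentially the same argument as the paper. Both use the reproducing property with Cauchy--Schwarz, then a second application of the reproducing property to identify $\|K(\,\cdot\,,x)v\|_{\mathcal H}^2$ with $\langle K(x,x)v,v\rangle\leq C_{\mathcal H}^2$, and finally take suprema over unit vectors $v$ and over $x\in\mathcal X$. The only cosmetic difference is that you bound the signed quantity and plug in $v=h(x)/\|h(x)\|$, while the paper bounds the absolute value, which is equally valid.
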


\begin{proof}
Fix $x\in\mathcal X$. For every $v\in\mathbb R^p$ with $\|v\|=1$, the
reproducing property and the Cauchy--Schwarz inequality give
\begin{align}
|\langle h(x),v\rangle|
&=
|\langle h,K(\,\cdot\,,x)v\rangle_{\mathcal H}|
\\
&\leq
\|h\|_{\mathcal H}
\|K(\,\cdot\,,x)v\|_{\mathcal H}.
\end{align}
Applying the reproducing property once more,
\begin{equation}
\|K(\,\cdot\,,x)v\|_{\mathcal H}^2
=
\langle K(x,x)v,v\rangle
\leq
C_{\mathcal H}^2.
\end{equation}
Taking the supremum over all unit vectors $v$ gives
\begin{equation}
\|h(x)\|
\leq
C_{\mathcal H}\|h\|_{\mathcal H}.
\end{equation}
Finally, take the supremum over $x\in\mathcal X$.
\end{proof}

\begin{remark}[Invariant kernels]
A sufficient condition for the pullback action
$(U_gh)(x)=h(g^{-1}\cdot x)$ to be unitary is
\begin{equation}
K(g\cdot x,g\cdot x')
=
K(x,x')
\qquad
\text{for all }g\in G
\text{ and }x,x'\in\mathcal X.
\end{equation}
As a nontrivial matrix-valued example on $\mathcal X=\mathbb R^d$ (or any
subset of that), consider the normalized diagonal \emph{spectral-mixture
kernel}
\begin{equation}
K(x,x')
=
\operatorname{diag}\bigl(
k_1(x-x'),\ldots,k_p(x-x')
\bigr),
\end{equation}
where, for $r\in\{1,\ldots,p\}$,
\begin{equation}
k_r(z)
=
\sum_{a=1}^{A_r}
\gamma_{r,a}
\exp\left(
-\frac{1}{2}z^\top\Lambda_{r,a}^{-1}z
\right)
\cos\left(\omega_{r,a}^\top z\right),
\end{equation}
with scalar weights and vector frequencies satisfying
\begin{equation}
\begin{gathered}
\gamma_{r,a}\geq0,
\qquad
\sum_{a=1}^{A_r}\gamma_{r,a}=1,\\
\omega_{r,a}\in\mathbb R^d,
\qquad
\Lambda_{r,a}\succ0.
\end{gathered}
\end{equation}
Different output coordinates may use different frequencies
$\omega_{r,a}$, bandwidth matrices $\Lambda_{r,a}$, and numbers of mixture
components, so $K$ need not be a scalar multiple of $I_p$. If
$g\cdot x=Q_gx+a_g$ is a Euclidean isometry and, for every $r,a,g$,
\begin{equation}
Q_g\Lambda_{r,a}Q_g^\top=\Lambda_{r,a},
\qquad
Q_g^\top\omega_{r,a}\in
\{\omega_{r,a},-\omega_{r,a}\},
\end{equation}
then each $k_r$ and hence $K$ is $G$-invariant. Moreover,
$K(x,x)=I_p$ for every $x$, so the associated vector-valued RKHS satisfies
$C_{\mathcal H}=1$ throughout this family, without any compactness assumption
on $\mathcal X$. The Gaussian kernel is recovered as the special case
$A_r=1$, $\gamma_{r,1}=1$, $\omega_{r,1}=0$, and
$\Lambda_{r,1}=\rho^2I_d$ for every $r$.
\end{remark}

\subsection{Uniform approximation of the augmented gradients}
\label{app:gradient-transfer}

For every $i$ and $\theta$, recall the gradient function
\begin{equation}
h_{i,\theta}(x)
=
\nabla_\theta\ell_i(x;\theta).
\end{equation}
Differentiating the augmented risks gives
\begin{align}
\nabla\mathcal R_n^G(\theta)
&=
\frac{1}{n|G|}
\sum_{i=1}^n
\sum_{g\in G}
h_{i,\theta}(g\cdot x_i),
\label{eq:app-full-gradient}
\\
\nabla\mathcal R_n^S(\theta)
&=
\frac{1}{nm}
\sum_{i=1}^n
\sum_{j=1}^m
h_{i,\theta}(g_j\cdot x_i).
\label{eq:app-sparse-gradient}
\end{align}

By the augmentation-aligned definition of $\Pi_S$ in
\eqref{eq:app-sampled-operator},
\begin{align}
(\Pi_Sh)(x)
&=
\frac{1}{m}\sum_{j=1}^m
h(g_j\cdot x),                                      \\
(\Pi_Gh)(x)
&=
\frac{1}{|G|}\sum_{g\in G}h(g\cdot x),
\end{align}
where the second identity uses the fact that inversion permutes $G$.
Consequently,
\begin{equation}
\nabla\mathcal R_n^S(\theta)
-
\nabla\mathcal R_n^G(\theta)
=
\frac{1}{n}
\sum_{i=1}^n
\left[
(\Pi_S-\Pi_G)h_{i,\theta}
\right](x_i).
\label{eq:app-gradient-operator-identity}
\end{equation}

\begin{theorem}[Uniform sparse-to-full gradient approximation]
\label{thm:app-uniform-gradient}
Suppose Assumption~\ref{ass:gradient-rkhs} holds. Then, with probability at
least $1-\delta$,
\begin{equation}
\sup_{\theta\in\Theta}
\left\|
\nabla\mathcal R_n^S(\theta)
-
\nabla\mathcal R_n^G(\theta)
\right\|
\leq
C_{\mathcal H}B_{\mathcal H}
\sqrt{
\frac{8}{3m}
\log\left(\frac{2|G|}{\delta}\right)
}.
\label{eq:app-uniform-gradient-bound}
\end{equation}
\end{theorem}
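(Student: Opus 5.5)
The plan is to condition on the single high-probability event supplied by Theorem~\ref{thm:app-tw} and then bound everything deterministically on that event. Let $E$ be the event that $\|\Pi_S-\Pi_G\|_{\mathrm{op},\mathcal H}\leq\tau$ with $\tau\coloneqq\sqrt{\frac{8}{3m}\log(2|G|/\delta)}$. Theorem~\ref{thm:app-tw} gives $\mathbb P(E)\geq 1-\delta$, since its bound is the minimum of $1$ and $\tau$, which is at most $\tau$. The event $E$ depends only on $S$ and not on $\theta$, so no union bound over the parameter space is needed. Consequently, any bound derived on $E$ holds simultaneously for all $\theta\in\Theta$, including the $S$-dependent iterates.

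On $E$, I fix an arbitrary $\theta\in\Theta$ and start from the identity \eqref{eq:app-gradient-operator-identity}. Applying the triangle inequality over $i$ gives
\begin{equation}
\bigl\|\nabla\mathcal R_n^S(\theta)-\nabla\mathcal R_n^G(\theta)\bigr\|
\leq
\frac1n\sum_{i=1}^n\bigl\|\bigl[(\Pi_S-\Pi_G)h_{i,\theta}\bigr](x_i)\bigr\|.
\end{equation}
For each term, I use the point-evaluation bound \eqref{eq:rkhs-linfty-comparison} from Assumption~\ref{ass:gradient-rkhs}, which is the content of Lemma~\ref{lem:app-evaluation} in the bounded-kernel case. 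This requires $(\Pi_S-\Pi_G)h_{i,\theta}\in\mathcal H$, which holds because each $U_g$ maps $\mathcal H$ to itself. The resulting bound is $C_{\mathcal H}\|(\Pi_S-\Pi_G)h_{i,\theta}\|_{\mathcal H}\leq C_{\mathcal H}\tau\|h_{i,\theta}\|_{\mathcal H}$ on $E$.

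Averaging over $i$ and applying \eqref{eq:uniform-rkhs-bound} yields the bound $C_{\mathcal H}B_{\mathcal H}\tau$. This bound does not depend on $\theta$, so taking the supremum over $\Theta$ gives \eqref{eq:app-uniform-gradient-bound} on $E$.

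There are two points to verify, neither of which I expect to be a serious obstacle. The first is the identity \eqref{eq:app-gradient-operator-identity}, which is already stated: it holds because $(U_{g_j^{-1}}h)(x)=h(g_j\cdot x)$ and inversion permutes $G$. The second is that the operator-norm estimate, proved after complexification, applies to the real space $\mathcal H$. This holds because the complexified operator restricts to the original one on $\mathcal H$, so its norm on the real space is no larger.

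The only step that could look delicate is the uniformity over $\theta$. It is handled entirely by the fact that the operator-norm event does not depend on $h$: the bound applies to every $h_{i,\theta}$ at once, even though $\theta$ may be chosen after observing $S$.
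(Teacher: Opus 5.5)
Your proposal is correct and follows the paper's own proof: you condition on the $S$-only operator-norm event of Theorem~\ref{thm:app-tw}, then chain the identity \eqref{eq:app-gradient-operator-identity}, the triangle inequality, the point-evaluation bound, the operator-norm bound and the uniform average RKHS-norm bound, and uniformity in $\theta$ is automatic because the event does not depend on $\theta$. Your extra checks are valid details that the paper leaves implicit: that $\min\{1,\tau\}\le\tau$, that $(\Pi_S-\Pi_G)h_{i,\theta}\in\mathcal H$, and that the complexified bound restricts to the real space.
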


\begin{proof}
Condition on the event in Theorem~\ref{thm:app-tw}. For any fixed
$\theta\in\Theta$, \eqref{eq:app-gradient-operator-identity}, the triangle
inequality, and Lemma~\ref{lem:app-evaluation} give
\begin{align}
\left\|
\nabla\mathcal R_n^S(\theta)
-
\nabla\mathcal R_n^G(\theta)
\right\| 
& \leq
\frac{1}{n}
\sum_{i=1}^n
\left\|
\left[
(\Pi_S-\Pi_G)h_{i,\theta}
\right](x_i)
\right\|
\\
&\quad\leq
\frac{C_{\mathcal H}}{n}
\sum_{i=1}^n
\left\|
(\Pi_S-\Pi_G)h_{i,\theta}
\right\|_{\mathcal H}
\\
&\quad\leq
C_{\mathcal H}
\|\Pi_S-\Pi_G\|_{\mathrm{op},\mathcal H}
 \times  \frac{1}{n}
\sum_{i=1}^n
\|h_{i,\theta}\|_{\mathcal H}.
\end{align}
Assumption~\ref{ass:gradient-rkhs} implies
\begin{equation}
\frac{1}{n}
\sum_{i=1}^n
\|h_{i,\theta}\|_{\mathcal H}
\leq
B_{\mathcal H}
\end{equation}
uniformly over $\theta$. Therefore,
\begin{equation}
\left\|
\nabla\mathcal R_n^S(\theta)
-
\nabla\mathcal R_n^G(\theta)
\right\|
\leq
C_{\mathcal H}B_{\mathcal H}
\sqrt{
\frac{8}{3m}
\log\left(\frac{2|G|}{\delta}\right)
}
\end{equation}
simultaneously for all $\theta\in\Theta$.
\end{proof}

\begin{remark}[Why pointwise concentration is insufficient]
For a fixed $\theta$ chosen independently of $S$, ordinary concentration
could control
$\nabla\mathcal R_n^S(\theta)-\nabla\mathcal R_n^G(\theta)$. However, the
iterates of one-shot sparse GD depend on $S$, so such a pointwise statement
cannot be substituted directly at $\theta=\theta_t$. The operator event in
Theorem~\ref{thm:app-tw} holds for every $h\in\mathcal H$ simultaneously,
and the RKHS norm bound is uniform over $\theta$. Thus,
\eqref{eq:app-uniform-gradient-bound} remains valid along the entire
$S$-dependent trajectory without requiring independence between $S$ and
$\theta_t$.
\end{remark}

\subsection{Nonconvex gradient descent}
\label{app:gd-proof}

We next recall the standard descent estimate for a smooth nonconvex
objective.

\begin{lemma}[Descent lemma]
\label{lem:app-descent}
Let $F\colon\mathbb R^p\to\mathbb R$ have an $L$-Lipschitz gradient. Then,
for every $\theta,\theta'\in\mathbb R^p$,
\begin{equation}
F(\theta')
\leq
F(\theta)
+
\langle\nabla F(\theta),\theta'-\theta\rangle
+
\frac{L}{2}\|\theta'-\theta\|^2.
\label{eq:app-descent-lemma}
\end{equation}
\end{lemma}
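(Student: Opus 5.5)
The descent lemma follows from the fundamental theorem of calculus along the segment from $\theta$ to $\theta'$, with the $L$-Lipschitz property of $\nabla F$ controlling the deviation of the gradient along that segment. The function $F$ has a Lipschitz, hence continuous, gradient, so $\phi(s)\coloneqq F(\theta+s(\theta'-\theta))$ is continuously differentiable on $[0,1]$ with $\phi'(s)=\langle\nabla F(\theta+s(\theta'-\theta)),\theta'-\theta\rangle$. This gives
\begin{equation}
F(\theta')-F(\theta)
=
\int_0^1\langle\nabla F(\theta+s(\theta'-\theta)),\theta'-\theta\rangle\,ds .
\end{equation}

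The next step is to add and subtract $\nabla F(\theta)$ inside the integral. The constant part integrates to exactly $\langle\nabla F(\theta),\theta'-\theta\rangle$, and what remains is the error term
\begin{equation}
E\coloneqq\int_0^1\langle\nabla F(\theta+s(\theta'-\theta))-\nabla F(\theta),\theta'-\theta\rangle\,ds .
\end{equation}

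To bound $E$, I would apply Cauchy--Schwarz to the integrand. The Lipschitz assumption gives $\|\nabla F(\theta+s(\theta'-\theta))-\nabla F(\theta)\|\leq Ls\|\theta'-\theta\|$, so the integrand is at most $Ls\|\theta'-\theta\|^2$. Since $\int_0^1 s\,ds=1/2$, this yields $E\leq\frac{L}{2}\|\theta'-\theta\|^2$, which is \eqref{eq:app-descent-lemma}.

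There is no genuine obstacle. The only point needing care is justifying the integral representation, that is, the chain rule together with continuity of $\phi'$, and this is immediate because a Lipschitz gradient is continuous. In the paper, the lemma will be applied to $F=\mathcal R_n^S$, which is $L$-smooth by \eqref{eq:smoothness}. If $\Theta$ is a proper subset of $\mathbb R^p$, the argument additionally requires the segment between $\theta$ and $\theta'$ to lie in the domain where smoothness holds, which I would note explicitly.
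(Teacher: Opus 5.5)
Your proposal is correct and follows the paper's proof essentially verbatim: the integral representation along the segment, adding and subtracting $\nabla F(\theta)$, and bounding the remainder by Cauchy--Schwarz with the Lipschitz estimate $Ls\|\theta'-\theta\|$ and $\int_0^1 s\,ds=1/2$. Your side remark is a fair caveat that the paper leaves implicit: when the lemma is applied to $\mathcal R_n^S$ with a nonconvex $\Theta\subsetneq\mathbb R^p$, the segment must stay in the region where smoothness holds.
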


\begin{proof}
Define the line segment
$\gamma(s)\coloneqq\theta+s(\theta'-\theta)$ for $s\in[0,1]$. By the
fundamental theorem of calculus,
\begin{align}
F(\theta')-F(\theta)
&=
\int_0^1
\langle \nabla F(\gamma(s)),\theta'-\theta\rangle\,\mathrm ds
\\
&=
\langle\nabla F(\theta),\theta'-\theta\rangle
+
\int_0^1
\langle\nabla F(\gamma(s))-\nabla F(\theta),
\theta'-\theta\rangle\,\mathrm ds.
\end{align}
The Cauchy--Schwarz inequality and the $L$-Lipschitz continuity of
$\nabla F$ imply
\begin{align}
\int_0^1
\langle\nabla F(\gamma(s))-\nabla F(\theta),
\theta'-\theta\rangle\,\mathrm ds
&\leq
\int_0^1
\|\nabla F(\gamma(s))-\nabla F(\theta)\|
\,\|\theta'-\theta\|\,\mathrm ds
\\
&\leq
\int_0^1 Ls\|\theta'-\theta\|^2\,\mathrm ds
=
\frac{L}{2}\|\theta'-\theta\|^2.
\end{align}
Combining the preceding two displays gives
\eqref{eq:app-descent-lemma}.
\end{proof}

\begin{lemma}[Nonconvex GD rate]
\label{lem:app-gd-rate}
Suppose $F$ is $L$-smooth and bounded below. Let
\begin{equation}
\theta_{t+1}
=
\theta_t-\eta\nabla F(\theta_t),
\qquad
0<\eta\leq\frac{1}{L}.
\end{equation}
Then
\begin{equation}
\frac{1}{T}
\sum_{t=0}^{T-1}
\|\nabla F(\theta_t)\|^2
\leq
\frac{2(F(\theta_0)-\inf_\theta F(\theta))}{\eta T}.
\label{eq:app-gd-average}
\end{equation}
Consequently,
\begin{equation}
\min_{0\leq t<T}
\|\nabla F(\theta_t)\|
\leq
\sqrt{
\frac{2(F(\theta_0)-\inf_\theta F(\theta))}{\eta T}
}.
\label{eq:app-gd-minimum}
\end{equation}
\end{lemma}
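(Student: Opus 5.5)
The plan is to apply the descent lemma (Lemma~\ref{lem:app-descent}) at each iterate, sum the per-step decreases so the sum telescopes, and then pass from the average of squared gradient norms to their minimum.

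Fix $t$ and apply Lemma~\ref{lem:app-descent} with $\theta=\theta_t$ and $\theta'=\theta_{t+1}=\theta_t-\eta\nabla F(\theta_t)$. Substituting the update gives
\begin{equation}
F(\theta_{t+1})
\leq
F(\theta_t)-\eta\Bigl(1-\frac{L\eta}{2}\Bigr)\|\nabla F(\theta_t)\|^2 .
\end{equation}
Since $0<\eta\leq 1/L$, we have $1-L\eta/2\geq 1/2$. Hence each step decreases the objective by at least $\frac{\eta}{2}\|\nabla F(\theta_t)\|^2$:
\begin{equation}
\frac{\eta}{2}\|\nabla F(\theta_t)\|^2\leq F(\theta_t)-F(\theta_{t+1}).
\end{equation}

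Next, sum this inequality over $t=0,\ldots,T-1$. The right-hand side telescopes to $F(\theta_0)-F(\theta_T)$. Because $F$ is bounded below, this is at most $F(\theta_0)-\inf_\theta F(\theta)$. Multiplying through by $2/(\eta T)$ yields \eqref{eq:app-gd-average}.

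For \eqref{eq:app-gd-minimum}, note that the minimum of the nonnegative numbers $\|\nabla F(\theta_t)\|^2$ is at most their average. Taking square roots, which are monotone, gives the bound on $\min_{0\leq t<T}\|\nabla F(\theta_t)\|$.

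The only step requiring care is the sign bookkeeping in the one-step decrease, specifically using $\eta\leq 1/L$ to lower-bound $1-L\eta/2$ by $1/2$. Everything else is routine.
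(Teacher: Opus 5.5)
Your proposal is correct and follows the paper's proof essentially step for step. You use the same chain: the descent lemma at $\theta_{t+1}=\theta_t-\eta\nabla F(\theta_t)$, then $1-L\eta/2\geq 1/2$ from $\eta\leq 1/L$, then telescoping with $F(\theta_T)\geq\inf_\theta F(\theta)$, and finally the observation that the minimum is at most the average, with a square root for the second bound.
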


\begin{proof}
Apply Lemma~\ref{lem:app-descent} with
$\theta'=\theta-\eta\nabla F(\theta)$. This gives
\begin{align}
F(\theta_{t+1})
&\leq
F(\theta_t)
-
\eta\|\nabla F(\theta_t)\|^2
+
\frac{L\eta^2}{2}\|\nabla F(\theta_t)\|^2
\\
&=
F(\theta_t)
-
\eta\left(1-\frac{L\eta}{2}\right)
\|\nabla F(\theta_t)\|^2.
\end{align}
Since $\eta\leq1/L$,
\begin{equation}
F(\theta_{t+1})
\leq
F(\theta_t)
-
\frac{\eta}{2}
\|\nabla F(\theta_t)\|^2.
\end{equation}
Summing over $t=0,\ldots,T-1$ yields
\begin{equation}
\frac{\eta}{2}
\sum_{t=0}^{T-1}
\|\nabla F(\theta_t)\|^2
\leq
F(\theta_0)-F(\theta_T)
\leq
F(\theta_0)-\inf_\theta F(\theta).
\end{equation}
Dividing by $\eta T/2$ proves \eqref{eq:app-gd-average}. The minimum is no
larger than the average, which proves \eqref{eq:app-gd-minimum}.
\end{proof}

\subsection{Proof of the main theorem}
\label{app:main-proof}

We now combine the uniform gradient approximation with the descent estimate.

\begin{theorem}[One-shot sparse GD]
\label{thm:app-main}
Suppose Assumptions~\ref{ass:gradient-rkhs}
and~\ref{ass:initial-gap} hold. Run
Algorithm~\ref{alg:sparse-aug} with $\eta_t=1/L$ for every $t$, and let
\begin{equation}
\widehat t
\in
\operatorname*{arg\,min}_{0\leq t<T}
\|\nabla\mathcal R_n^S(\theta_t)\|,
\qquad
\widehat\theta=\theta_{\widehat t}.
\end{equation}
Then, with probability at least $1-\delta$,
\begin{equation}
\|\nabla\mathcal R_n^G(\widehat\theta)\|
\leq
\sqrt{\frac{2L\Delta}{T}}
+
C_{\mathcal H}B_{\mathcal H}
\sqrt{
\frac{8}{3m}
\log\left(\frac{2|G|}{\delta}\right)
}.
\label{eq:app-main-bound}
\end{equation}
\end{theorem}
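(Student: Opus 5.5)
The plan is to work on the single high-probability event $\mathcal E$ of Theorem~\ref{thm:app-uniform-gradient}. This event depends only on the draw of $S$ and has probability at least $1-\delta$. On it we have the uniform bound
\begin{equation*}
\sup_{\theta\in\Theta}\left\|\nabla\mathcal R_n^S(\theta)-\nabla\mathcal R_n^G(\theta)\right\|\leq C_{\mathcal H}B_{\mathcal H}\sqrt{\frac{8}{3m}\log\left(\frac{2|G|}{\delta}\right)}.
\end{equation*}
The bound is uniform over $\Theta$ and $\mathcal E$ is determined by $S$ alone, so it applies at every iterate $\theta_t$, and in particular at $\widehat\theta$. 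This holds even though the whole trajectory is a deterministic function of $S$. It is the only place where probability enters; everything after this is deterministic reasoning on $\mathcal E$.

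Next, for fixed $S$, I apply Lemma~\ref{lem:app-gd-rate} to $F=\mathcal R_n^S$ with $\eta=1/L$. The smoothness condition \eqref{eq:smoothness} makes $\mathcal R_n^S$ $L$-smooth, since averaging preserves the Lipschitz constant of the gradient. Assumption~\ref{ass:initial-gap} gives $F(\theta_0)-\inf F\le\Delta<\infty$, and in particular $F$ is bounded below. Hence
\begin{equation*}
\|\nabla\mathcal R_n^S(\widehat\theta)\|=\min_{0\le t<T}\|\nabla\mathcal R_n^S(\theta_t)\|\le\sqrt{2L\Delta/T}.
\end{equation*}
The equality is exactly the selection rule for $\widehat t$ in Algorithm~\ref{alg:sparse-aug}. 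The gap bound must hold for every realization of $S$, which is why Assumption~\ref{ass:initial-gap} is stated uniformly over multisets; conditioning on $S$ is therefore harmless.

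Finally, the triangle inequality at $\widehat\theta$ gives
\begin{equation*}
\|\nabla\mathcal R_n^G(\widehat\theta)\|\le\|\nabla\mathcal R_n^S(\widehat\theta)\|+\|\nabla\mathcal R_n^G(\widehat\theta)-\nabla\mathcal R_n^S(\widehat\theta)\|.
\end{equation*}
Bounding the first term by the GD rate and the second by the uniform event $\mathcal E$ yields \eqref{eq:app-main-bound}.

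One technical point needs checking: Lemma~\ref{lem:app-gd-rate} is stated on $\mathbb R^p$, while the iterates must remain in $\Theta$. I would take $\Theta=\mathbb R^p$, or at least assume the iterates stay in $\Theta$, as is implicit in the algorithm. The main conceptual obstacle, the dependence between $S$ and the trajectory, is already resolved by the uniformity in Theorem~\ref{thm:app-uniform-gradient}.
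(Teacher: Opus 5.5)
Your proposal is correct and matches the paper's proof step for step. You restrict to the $S$-only event of Theorem~\ref{thm:app-uniform-gradient}, apply Lemma~\ref{lem:app-gd-rate} to $\mathcal R_n^S$ with $\eta=1/L$ and the uniform gap $\Delta$, and finish with the triangle inequality at $\widehat\theta$. Your caveat that the iterates must remain in $\Theta$ (for instance, $\Theta=\mathbb R^p$) for the descent lemma and the uniform bound to apply is a valid point that the paper leaves implicit.
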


\begin{proof}
Apply Lemma~\ref{lem:app-gd-rate} to
$F=\mathcal R_n^S$ with $\eta=1/L$. Condition~\eqref{eq:smoothness}
ensures that $F$ is $L$-smooth, while
Assumption~\ref{ass:initial-gap} gives
\begin{equation}
\mathcal R_n^S(\theta_0)
-
\inf_{\theta}\mathcal R_n^S(\theta)
\leq
\Delta.
\end{equation}
Therefore,
\begin{equation}
\|\nabla\mathcal R_n^S(\widehat\theta)\|
\leq
\sqrt{\frac{2L\Delta}{T}}.
\label{eq:app-sparse-stationarity}
\end{equation}

On the event of Theorem~\ref{thm:app-uniform-gradient}, which has probability
at least $1-\delta$,
\begin{align}
\|\nabla\mathcal R_n^G(\widehat\theta)\|
&\leq
\|\nabla\mathcal R_n^S(\widehat\theta)\|
+
\|\nabla\mathcal R_n^G(\widehat\theta)
-\nabla\mathcal R_n^S(\widehat\theta)\|
\\
&\leq
\sqrt{\frac{2L\Delta}{T}}
+
C_{\mathcal H}B_{\mathcal H}
\sqrt{
\frac{8}{3m}
\log\left(\frac{2|G|}{\delta}\right)
}.
\end{align}
This is precisely \eqref{eq:app-main-bound}.
\end{proof}

\begin{corollary}[Stationarity and group-oracle complexity]
\label{cor:app-main-complexity}
Under the assumptions of Theorem~\ref{thm:app-main}, suppose
\begin{equation}
T
\geq
\frac{8L\Delta}{\epsilon^2}
\label{eq:app-T-choice}
\end{equation}
and
\begin{equation}
m
\geq
\frac{
32C_{\mathcal H}^2B_{\mathcal H}^2
}{
3\epsilon^2
}
\log\left(\frac{2|G|}{\delta}\right).
\label{eq:app-m-choice}
\end{equation}
Then
\begin{equation}
\|\nabla\mathcal R_n^G(\widehat\theta)\|
\leq
\epsilon
\end{equation}
with probability at least $1-\delta$.
\end{corollary}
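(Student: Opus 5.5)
The plan is to derive the corollary directly from Theorem~\ref{thm:app-main} by making each of the two error terms in \eqref{eq:app-main-bound} at most $\epsilon/2$. The probability-$(1-\delta)$ event is the one already provided by Theorem~\ref{thm:app-main}, namely the event of Theorem~\ref{thm:app-uniform-gradient}. That event depends only on $S$. The choices of $T$ and $m$ are deterministic, so they do not change the event or its probability. No new concentration argument is needed.

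\emph{Optimization term.} The condition \eqref{eq:app-T-choice} gives $2L\Delta/T \leq 2L\Delta\,\epsilon^2/(8L\Delta) = \epsilon^2/4$. Hence $\sqrt{2L\Delta/T}\leq \epsilon/2$. If $\Delta=0$ the term is zero and the bound is trivial.

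\emph{Sparse-approximation term.} The condition \eqref{eq:app-m-choice} gives
\begin{equation*}
\frac{8}{3m}\log\!\left(\frac{2|G|}{\delta}\right)
\leq
\frac{8}{3}\cdot\frac{3\epsilon^2}{32C_{\mathcal H}^2B_{\mathcal H}^2}
=
\frac{\epsilon^2}{4C_{\mathcal H}^2B_{\mathcal H}^2}.
\end{equation*}
Multiplying the square root by $C_{\mathcal H}B_{\mathcal H}$ therefore gives at most $\epsilon/2$. If $C_{\mathcal H}B_{\mathcal H}=0$, the term vanishes identically.

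Summing the two bounds on the event of Theorem~\ref{thm:app-main} yields $\|\nabla\mathcal R_n^G(\widehat\theta)\|\leq\epsilon$ with probability at least $1-\delta$. There is no real obstacle: the only care needed is the arithmetic with the constants and the degenerate cases $\Delta=0$ or $C_{\mathcal H}B_{\mathcal H}=0$.
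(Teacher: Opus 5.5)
Your proposal is correct and follows essentially the same route as the paper: the paper also uses the two parameter choices to bound $\sqrt{2L\Delta/T}$ and the sparse-approximation term by $\epsilon/2$ each, then invokes Theorem~\ref{thm:app-main} on its probability-$(1-\delta)$ event. Your arithmetic is right, and your remarks on the degenerate cases $\Delta=0$ and $C_{\mathcal H}B_{\mathcal H}=0$ are a harmless addition that the paper leaves implicit.
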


\begin{proof}
The choice \eqref{eq:app-T-choice} gives
\begin{equation}
\sqrt{\frac{2L\Delta}{T}}
\leq
\frac{\epsilon}{2}.
\end{equation}
Similarly, \eqref{eq:app-m-choice} gives
\begin{equation}
C_{\mathcal H}B_{\mathcal H}
\sqrt{
\frac{8}{3m}
\log\left(\frac{2|G|}{\delta}\right)
}
\leq
\frac{\epsilon}{2}.
\end{equation}
The result follows from Theorem~\ref{thm:app-main}.
\end{proof}

Because the sample $S$ is acquired only once, the number of group-oracle calls
is exactly $m$. Therefore,
\begin{equation}
m
=
\mathcal O\left(
\frac{
C_{\mathcal H}^2B_{\mathcal H}^2
\bigl(\log|G|+\log(1/\delta)\bigr)
}{
\epsilon^2
}
\right).
\label{eq:app-final-oracle-complexity}
\end{equation}
The number of optimization iterations is
\begin{equation}
T
=
\mathcal O\left(
\frac{L\Delta}{\epsilon^2}
\right).
\label{eq:app-final-iteration-complexity}
\end{equation}
The sampled transformations are cached after the initial oracle calls and can
be reused at every iteration without any further transformation acquisition.

\subsection{Classical baseline rates}
\label{app:baseline-proofs}

For completeness, we state the standard full-group GD and group-SGD
guarantees used for comparison.

\subsubsection{Full-group gradient descent}

Applying Lemma~\ref{lem:app-gd-rate} directly to
$F=\mathcal R_n^G$ with $\eta=1/L$ gives
\begin{equation}
\min_{0\leq t<T}
\|\nabla\mathcal R_n^G(\theta_t)\|^2
\leq
\frac{2L\Delta_G}{T},
\label{eq:app-full-gd-rate}
\end{equation}
where
\begin{equation}
\Delta_G
\coloneqq
\mathcal R_n^G(\theta_0)
-
\inf_\theta\mathcal R_n^G(\theta).
\end{equation}
Thus, $T\geq2L\Delta_G/\epsilon^2$ iterations suffice for
$\epsilon$-stationarity. This method requires access to all $|G|$ group
elements, corresponding to full-group oracle access.

\subsubsection{Streaming group-SGD}

For each iteration $t$, let
$S_t=\{g_{t,1},\ldots,g_{t,b}\}$ be a fresh batch of $b$ independent uniform
samples from $G$, and define
\begin{equation}
r_g(\theta)
\coloneqq
\frac{1}{n}
\sum_{i=1}^n
\ell_i(g\cdot x_i;\theta).
\end{equation}
Then
\begin{equation}
\mathcal R_n^G(\theta)
=
\mathbb E_g[r_g(\theta)]
\end{equation}
and
\begin{equation}
\mathbb E_g[\nabla r_g(\theta)]
=
\nabla\mathcal R_n^G(\theta).
\end{equation}
Assume that
\begin{equation}
\mathbb E_g
\left[
\|\nabla r_g(\theta)-\nabla\mathcal R_n^G(\theta)\|^2
\right]
\leq
\sigma_G^2
\label{eq:app-group-variance}
\end{equation}
uniformly over $\theta$. Writing
\begin{equation}
\nabla\mathcal R_n^{S_t}(\theta)
=
\frac{1}{b}
\sum_{j=1}^b \nabla r_{g_{t,j}}(\theta),
\end{equation}
independence within the batch gives
\begin{equation}
\mathbb E
\left[
\left\|
\nabla\mathcal R_n^{S_t}(\theta)
-
\nabla\mathcal R_n^G(\theta)
\right\|^2
\right]
\leq
\frac{\sigma_G^2}{b}.
\label{eq:app-batch-variance}
\end{equation}
Group-SGD performs
\begin{equation}
\theta_{t+1}
=
\theta_t-\eta\nabla\mathcal R_n^{S_t}(\theta_t).
\end{equation}

\begin{proposition}[Streaming group-SGD]
\label{prop:app-group-sgd}
Let $\tau$ be independent and uniformly distributed over
$\{0,\ldots,T-1\}$. If $0<\eta\leq1/L$, then
\begin{equation}
\mathbb E
\left[
\|\nabla\mathcal R_n^G(\theta_\tau)\|^2
\right]
\leq
\frac{2\Delta_G}{\eta T}
+
\frac{L\eta\sigma_G^2}{b}.
\label{eq:app-sgd-rate}
\end{equation}
Here, as in the preceding subsection,
$\Delta_G=\mathcal R_n^G(\theta_0)-\inf_\theta\mathcal R_n^G(\theta)$.
\end{proposition}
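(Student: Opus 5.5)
This is the standard nonconvex SGD argument: a one-step descent inequality for $\mathcal R_n^G$, conditional expectations using the independence of $S_t$ from $\theta_t$, and telescoping. Write $F\coloneqq\mathcal R_n^G$, which is $L$-smooth by \eqref{eq:smoothness}. Also write $\xi_t\coloneqq\nabla\mathcal R_n^{S_t}(\theta_t)-\nabla F(\theta_t)$.

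\textbf{Step 1 (one-step descent).} Apply Lemma~\ref{lem:app-descent} to $F$ with $\theta'=\theta_{t+1}=\theta_t-\eta(\nabla F(\theta_t)+\xi_t)$. This gives
\begin{equation*}
F(\theta_{t+1})\leq F(\theta_t)-\eta\langle\nabla F(\theta_t),\nabla F(\theta_t)+\xi_t\rangle+\frac{L\eta^2}{2}\|\nabla F(\theta_t)+\xi_t\|^2 .
\end{equation*}

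\textbf{Step 2 (conditional expectation).} Let $\mathcal F_t$ be the $\sigma$-algebra generated by $S_0,\ldots,S_{t-1}$. Then $\theta_t$ is $\mathcal F_t$-measurable and $S_t$ is independent of $\mathcal F_t$. Hence $\mathbb E[\xi_t\mid\mathcal F_t]=0$ by unbiasedness, and $\mathbb E[\|\xi_t\|^2\mid\mathcal F_t]\leq\sigma_G^2/b$ by \eqref{eq:app-batch-variance}, applied at the fixed point $\theta=\theta_t$. This is legitimate precisely because $S_t$ is fresh. Expanding the square, the cross terms vanish, and we obtain
\begin{equation*}
\mathbb E[F(\theta_{t+1})\mid\mathcal F_t]\leq F(\theta_t)-\eta\Bigl(1-\frac{L\eta}{2}\Bigr)\|\nabla F(\theta_t)\|^2+\frac{L\eta^2\sigma_G^2}{2b}.
\end{equation*}
Using $\eta\leq 1/L$, the coefficient $\eta(1-L\eta/2)$ is at least $\eta/2$.

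\textbf{Step 3 (telescoping).} Take total expectations and sum over $t=0,\ldots,T-1$. Bound $F(\theta_0)-\mathbb E F(\theta_T)$ by $\Delta_G$, which is valid because $F(\theta_T)\geq\inf F$. This yields
\begin{equation*}
\frac{\eta}{2}\sum_{t=0}^{T-1}\mathbb E\|\nabla F(\theta_t)\|^2\leq\Delta_G+\frac{TL\eta^2\sigma_G^2}{2b}.
\end{equation*}
Divide by $\eta T/2$. Since $\tau$ is uniform on $\{0,\ldots,T-1\}$ and independent of everything else, $\mathbb E\|\nabla F(\theta_\tau)\|^2=\frac1T\sum_t\mathbb E\|\nabla F(\theta_t)\|^2$, which gives \eqref{eq:app-sgd-rate}.

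\textbf{Main obstacle.} The only delicate point is Step 2. The variance bound \eqref{eq:app-batch-variance} is stated for a fixed $\theta$, so it must be transferred to the random iterate $\theta_t$ via conditioning on $\mathcal F_t$. This works because $S_t$ is independent of $\theta_t$ and the bound \eqref{eq:app-group-variance} holds uniformly over $\theta$. This independence is exactly what fails in the one-shot setting.
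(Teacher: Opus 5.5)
Your proposal is correct and follows the paper's proof essentially step for step: the descent lemma for $\mathcal R_n^G$, then conditioning on the pre-batch history to use conditional unbiasedness and the batch variance bound, then telescoping and averaging over the uniform index $\tau$. The paper phrases the middle step as the conditional variance decomposition of $\mathbb E_t\|\nabla\mathcal R_n^{S_t}(\theta_t)\|^2$ rather than expanding the square with $\xi_t$, but this is the same computation.
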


\begin{proof}
Let $\mathcal F_t$ denote the history before the fresh batch $S_t$ is drawn,
and write $\mathbb E_t[\,\cdot\,]
\coloneqq\mathbb E[\,\cdot\mid\mathcal F_t]$. In particular, $\theta_t$ is
$\mathcal F_t$-measurable. Conditioning on $\mathcal F_t$ and applying the
descent lemma gives
\begin{align}
\mathbb E_t[\mathcal R_n^G(\theta_{t+1})]
&\leq
\mathcal R_n^G(\theta_t)
-
\eta
\left\langle
\nabla\mathcal R_n^G(\theta_t),
\mathbb E_t[\nabla\mathcal R_n^{S_t}(\theta_t)]
\right\rangle
\nonumber\\
&\quad+
\frac{L\eta^2}{2}
\mathbb E_t
\left[
\|\nabla\mathcal R_n^{S_t}(\theta_t)\|^2
\right].
\end{align}
Conditional unbiasedness and \eqref{eq:app-batch-variance} imply
\begin{align}
\mathbb E_t
\left[
\|\nabla\mathcal R_n^{S_t}(\theta_t)\|^2
\right]
&=
\|\nabla\mathcal R_n^G(\theta_t)\|^2
+
\mathbb E_t
\left[
\|\nabla\mathcal R_n^{S_t}(\theta_t)
-\nabla\mathcal R_n^G(\theta_t)\|^2
\right]
\nonumber\\
&\leq
\|\nabla\mathcal R_n^G(\theta_t)\|^2+
\frac{\sigma_G^2}{b}.
\end{align}
The equality is the conditional variance decomposition; conditional
unbiasedness makes its cross term zero.
Therefore,
\begin{align}
\mathbb E_t[\mathcal R_n^G(\theta_{t+1})]
&\leq
\mathcal R_n^G(\theta_t)
-
\eta\left(1-\frac{L\eta}{2}\right)
\|\nabla\mathcal R_n^G(\theta_t)\|^2
+
\frac{L\eta^2\sigma_G^2}{2b}
\\
&\leq
\mathcal R_n^G(\theta_t)
-
\frac{\eta}{2}
\|\nabla\mathcal R_n^G(\theta_t)\|^2
+
\frac{L\eta^2\sigma_G^2}{2b}.
\end{align}
Now take total expectations. The tower property gives
$\mathbb E[\mathbb E_t[\mathcal R_n^G(\theta_{t+1})]]
=\mathbb E[\mathcal R_n^G(\theta_{t+1})]$; the remaining terms are already
$\mathcal F_t$-measurable. Summing the resulting inequality over $t$ gives
\begin{equation}
\frac{\eta}{2}
\sum_{t=0}^{T-1}
\mathbb E
\left[
\|\nabla\mathcal R_n^G(\theta_t)\|^2
\right]
\leq
\Delta_G
+
\frac{L\eta^2\sigma_G^2T}{2b}.
\end{equation}
Divide by $\eta T/2$ and use the uniform distribution of $\tau$.
\end{proof}

Choosing
\begin{equation}
\eta
=
\min\left\{
\frac{1}{L},
\frac{b\epsilon^2}{2L\sigma_G^2}
\right\}
\end{equation}
and
\begin{equation}
T
\geq
\frac{4\Delta_G}{\eta\epsilon^2}
\end{equation}
ensures
\begin{equation}
\mathbb E
\left[
\|\nabla\mathcal R_n^G(\theta_\tau)\|^2
\right]
\leq
\epsilon^2.
\end{equation}
Consequently,
\begin{equation}
T
=
\mathcal O\left(
\frac{L\Delta_G}{\epsilon^2}
+
\frac{L\Delta_G\sigma_G^2}{b\epsilon^4}
\right).
\label{eq:app-sgd-complexity}
\end{equation}
Thus, when the stochastic-variance term dominates, the iteration complexity
scales as $\mathcal O(1/(b\epsilon^4))$, as reported in
Table~\ref{tab:nonconvex-comparison}. Since each iteration draws $b$ fresh
transformations, the resulting group-oracle complexity is
$\mathcal O(1/\epsilon^4)$ and does not improve with the batch size.

\end{document}